\newcounter{thm_counter}
\newcounter{lem_counter}
\newcounter{pro_counter}
\newcounter{ass_counter}
\newcounter{rmk_counter}
\newtheorem{theorem}[thm_counter]{Theorem}
\newtheorem{proposition}[pro_counter]{Proposition}
\newtheorem{lemma}[lem_counter]{Lemma}
\newtheorem{assumption}[ass_counter]{Assumption}
\newtheorem{remark}[rmk_counter]{Remark}
\newcommand{\mop}{{\mathcal{T}}}
\newcommand{\eopt}{\epsilon_{\text{opt}}}
\newcommand{\E}{\mathbb{E}}
\newcommand{\R}{\mathbb{R}}
\icmltitlerunning{GradientDICE}
\begin{document}
\twocolumn[
\icmltitle{GradientDICE: Rethinking Generalized Offline Estimation\\of Stationary Values}



\icmlsetsymbol{equal}{*}

\begin{icmlauthorlist}
\icmlauthor{Shangtong Zhang}{ox}
\icmlauthor{Bo Liu}{au}
\icmlauthor{Shimon Whiteson}{ox}
\end{icmlauthorlist}

\icmlaffiliation{ox}{University of Oxford}
\icmlaffiliation{au}{Auburn University}

\icmlcorrespondingauthor{Shangtong Zhang}{\mbox{shangtong.zhang@cs.ox.ac.uk}}

\icmlkeywords{Reinforcement Learning, Off-policy Evaluation}

\vskip 0.3in
]



\printAffiliationsAndNotice{}  

\begin{abstract}
We present GradientDICE for estimating the density ratio between the state distribution of the target policy and the sampling distribution in off-policy reinforcement learning.
GradientDICE fixes several problems of GenDICE \citep{zhang2020gendice}, the state-of-the-art for estimating such density ratios. 
Namely, the optimization problem in GenDICE is \emph{not} a convex-concave saddle-point problem once nonlinearity in optimization variable parameterization is introduced to ensure positivity, 
so any primal-dual algorithm is \emph{not} guaranteed to converge or find the desired solution. 
However, such nonlinearity is essential to ensure the consistency of GenDICE even with a tabular representation.
This is a fundamental contradiction,
resulting from GenDICE's original formulation of the optimization problem.
In GradientDICE, we optimize a different objective from GenDICE
by using the Perron-Frobenius theorem and eliminating GenDICE's use of divergence,
such that nonlinearity in parameterization is not necessary for GradientDICE, 
which is provably convergent under linear function approximation.
\end{abstract}
\setcounter{footnote}{2}
\section{Introduction}
A key challenge in reinforcement learning (RL, \citealt{sutton2018reinforcement}) is off-policy evaluation \citep{precup2001off,maei2011gradient,jiang2015doubly,sutton2018reinforcement,liu2018breaking,nachum2019dualdice,zhang2020gendice},
where we want to estimate the performance of a target policy (average reward in the continuing setting or expected total discounted reward in the episodic setting \citep{puterman2014markov}),
from data generated by one or more behavior policies.
Compared with on-policy evaluation \citep{sutton1988learning},
which requires data generated by the target policy,
off-policy evaluation is more flexible.
We can evaluate a new policy with existing data in a replay buffer \citep{lin1992self} without interacting with the environment again.
We can also evaluate multiple target policies simultaneously when following a single behavior policy \citep{sutton2011horde}.

One major challenge in off-policy evaluation is dealing with distribution mismatch: the state distribution of the target policy is different from the sampling distribution.
This mismatch leads to divergence of the off-policy linear temporal difference learning algorithm \citep{baird1995residual,tsitsiklis1997analysis}.
\citet{precup2001off} 
address this issue with products of importance sampling ratios,
which, however, suffer from a large variance.
To correct for distribution mismatch without incurring a  large variance,
\citet{hallak2017consistent,liu2018breaking} propose to directly learn the density ratio between the state distribution of the target policy and the sampling distribution using function approximation.\footnote{Such density ratios are referred to as \emph{stationary values} in \citet{zhang2020gendice}}
Intuitively, this learned density ratio is a ``marginalization'' of the products of importance sampling ratios.

The density ratio learning algorithms from \citet{hallak2017consistent,liu2018breaking} require data generated by a \emph{single known} behavior policy.
\citet{nachum2019dualdice} relax this constraint in DualDICE, 
which is compatible with \emph{multiple unknown} behavior policies and \emph{offline} training. 
DualDICE, however, copes well with only the total discounted reward criterion and cannot be used under the average reward criterion. 
In particular, DualDICE becomes unstable as the discounting factor grows towards 1 \citep{zhang2020gendice}.
To address the limitation of DualDICE, \citet{zhang2020gendice} propose GenDICE, 
which is compatible with multiple unknown behavior policies and offline training \emph{under both criteria}. 
\citet{zhang2020gendice} show empirically that GenDICE achieves a new state-of-the-art in off-policy evaluation.

In this paper, we point out key problems with GenDICE. 
In particular, the optimization problem in GenDICE is \emph{not} a convex-concave saddle-point problem (CCSP) once nonlinearity in optimization variable parameterization is introduced to ensure positivity, 
so any primal-dual algorithm is not guaranteed to converge or find the desired solution even with tabular representation. 
However, such positivity is essential to ensure the consistency of GenDICE.
This is a fundamental contradiction,
resulting from GenDICE's original formulation of the optimization problem.

Furthermore, we propose GradientDICE, which overcomes these problems.  GradientDICE optimizes a different objective from GenDICE
by using the Perron-Frobenius theorem \citep{horn2012matrix} and eliminating GenDICE's use of divergence.
Consequently, nonlinearity in parameterization is not necessary for GradientDICE, which is provably convergent under linear function approximation.
Finally, we provide empirical results demonstrating the advantages of GradientDICE over GenDICE and DualDICE. 

\section{Background}
We use vectors and functions interchangeably when this does not confuse.
For example, let $f: \mathcal{X} \times \mathcal{Y} \rightarrow \R$ be a function;
we also use $f$ to denote the corresponding vector in $\R^{|\mathcal{X}| \times |\mathcal{Y}|}$.
All vectors are column vectors.

We consider an infinite-horizon MDP with a finite state space $\mathcal{S}$, a finite action space $\mathcal{A}$, 
a transition kernel $p: \mathcal{S} \times \mathcal{S} \times \mathcal{A} \rightarrow [0, 1]$, a reward function $r: \mathcal{S} \times \mathcal{A} \rightarrow \R$,
a discount factor $\gamma \in [0, 1)$, 
and an initial state distribution $\tilde{\mu}_0$.
The initial state $S_0$ is sampled from $\tilde{\mu}_0$.
At time step $t$, an agent at $S_t$ selects an action $A_t$ according to $\pi(\cdot | S_t)$, where $\pi: \mathcal{A} \times \mathcal{S} \rightarrow [0, 1]$ is the policy being followed by the agent. 
The agent then proceeds to the next state $S_{t+1}$ according to $p(\cdot | S_t, A_t)$ and gets a reward $R_{t+1}$ satisfying $\mathbb{E}[R_{t+1}] = r(S_t, A_t)$.

Similar to \citet{zhang2020gendice}, we consider two performance metrics for the policy $\pi$:
the total discounted reward 
$\rho_\gamma(\pi) \doteq (1 - \gamma) \E[\sum_{t=0}^\infty \gamma^t R_{t+1}]$
and the average reward
$\rho(\pi) \doteq \lim_{T \rightarrow \infty} \frac{1}{T} \E[\sum_{t=1}^T R_t]$.
When the Markov chain induced by $\pi$ is ergodic, 
$\rho(\pi)$ is always well defined \citep{puterman2014markov}.
Throughout this paper, we implicitly assume this ergodicity whenever we consider $\rho(\pi)$.
When considering $\rho_\gamma(\pi)$, we are interested in the normalized discounted state-action occupation $d_\gamma(s, a)$.
Let $d_t^\pi(s, a) \doteq \Pr(S_t = s, A_t = a | \tilde{\mu}_0, p, \pi)$ be the probability of occupying the state-action pair $(s, a)$ at the time step $t$ following $\pi$. Then,
we have 
$d_{\gamma}(s, a) \doteq (1 - \gamma) \sum_{t=0}^\infty \gamma^t d_t^\pi(s, a)$.
When considering $\rho(\pi)$, we are interested in the stationary state-action distribution 
$d_{\gamma}(s, a) \doteq \lim_{t\rightarrow \infty} d_t^\pi(s, a)$.
To simplify notation, we extend the definition of $\rho_\gamma(\pi)$ and $d_\gamma(s, a)$ from $\gamma \in [0, 1)$ to $\gamma \in [0, 1]$ by defining $\rho_1(\pi) \doteq \rho(\pi)$ and $d_1(s, a) \doteq d(s, a)$.
It follows that for any $\gamma \in [0, 1]$, we have
$\rho_\gamma(\pi) = \E_{(s, a)\sim d_\gamma}[r(s, a)]$.

We are interested in estimating $\rho_\gamma(\pi)$ without executing the policy $\pi$.
Similar to \citet{zhang2020gendice}, we assume access to a fixed dataset
$\mathcal{D} \doteq \{ (s_i, a_i, r_i, s_i^\prime) \}_{i=1}^N$.
Here the state-action pair $(s_i, a_i)$ is sampled from an unknown distribution $d_\mu: \mathcal{S} \times \mathcal{A} \rightarrow [0, 1]$,
which may result from multiple unknown behavior policies.
The reward $r_i$ satisfies $\E[r_i] = r(s_i, a_i)$.
The successor state $s_i^\prime$ is sampled from $p(\cdot | s_i, a_i)$.
As $\rho_\gamma(\pi) = \E_{(s, a) \sim d_\mu}[\frac{d_\gamma(s,a)}{d_\mu(s, a)}r(s, a)]$,
one possible approach for estimating $\rho_\gamma(\pi)$ is to learn the density ratio $\tau_*(s, a) \doteq \frac{d_\gamma(s,a)}{d_\mu(s, a)}$ directly.

We assume $d_\mu(s, a) > 0$ for every $(s, a)$ 
and use $D \in \R^{N_{sa} \times N_{sa}} (N_{sa} \doteq |\mathcal{S}| \times |\mathcal{A}|)$ to denote a diagonal matrix 
whose diagonal entry is $d_\mu$.
Let $\mu_0(s, a) \doteq \tilde{\mu}_0(s)\pi(a|s)$,
\citet{zhang2020gendice} show
\begin{align}
\label{eq:tau_recursion}
D\tau_* = \mop \tau_*, 
\end{align}
where the operator $\mop$ is defined as
\begin{align}
\mop y \doteq (1 - \gamma)\mu_0 + \gamma P^\top_\pi Dy,
\end{align}
and $P_\pi \in \R^{N_{sa} \times N_{sa}}$ is the state-action pair transition matrix, i.e.,
$P_\pi((s, a), (s^\prime, a^\prime)) \doteq p(s^\prime|s, a) \pi(a^\prime | s^\prime)$.
The operator $\mop$ is similar to the Bellman operator but in the reverse direction.
Similar ideas have been explored by \citet{wang2007dual,wang2008stable,hallak2017consistent,liu2018breaking,gelada2019off}.
As $D\tau_*$ is a probability measure, \citet{zhang2020gendice} propose to compute $\tau_*$ by solving the following optimization problem:
\begin{align}
\label{pb:original_problem}
\textstyle
\min_{\tau \in \R^{N_{sa}}, \tau \succeq 0} \, D_\phi(\mop \tau || D \tau) 
\quad \text{s.t.} \, d_\mu^\top \tau = 1,
\end{align}
where $\tau \succeq 0$ is elementwise greater or equal, $D_\phi$ is an $f$-divergence \citep{nowozin2016f} associated with a convex, lower semi-continuous generator function $\phi: \R_+ \rightarrow \R$ with $\phi(1) = 0$.
Let $q_1, q_2$ be two probability measures; we have
$D_\phi(q_1||q_2) \doteq \sum_x q_2(x) \phi( \frac{q_1(x)}{q_2(x)})$.
The $f$-divergence is used mainly for the ease of optimization but see \citet{zhang2020gendice} for discussion of other possible divergences.
Due to the difficulty in solving the constrained problem~\eqref{pb:original_problem} directly,
\citet{zhang2020gendice} propose to solve the following problem instead:
\begin{align}
\label{pb:penalty_problem}
\textstyle
\min_{\tau \in \R^{N_{sa}}, \tau \succeq 0} \, D_\phi(\mop \tau || D \tau) + \frac{\lambda}{2}(d_\mu^\top \tau - 1)^2,
\end{align}
where $\lambda > 0$ is a constant.
We have
\begin{lemma}
\label{lem:their_lemma} 
\citep{zhang2020gendice}
For any constant $\lambda > 0$, $\tau$ is optimal for the problem~\eqref{pb:penalty_problem} iff $\tau = \tau_*$.
\end{lemma}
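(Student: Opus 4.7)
The plan is to exhibit $0$ as both the value of problem \eqref{pb:penalty_problem} at $\tau_*$ and a lower bound on the objective over $\{\tau \succeq 0\}$, and then argue that $\tau_*$ is the unique minimizer.

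For the ``if'' direction I substitute directly. Equation \eqref{eq:tau_recursion} yields $\mop\tau_* = D\tau_*$, so every summand in $D_\phi(\mop\tau_*\|D\tau_*) = \sum_{(s,a)} D\tau_*(s,a)\phi(1)$ vanishes by $\phi(1)=0$. The identity $d_\mu^\top \tau_* = \mathbf{1}^\top D\tau_* = \mathbf{1}^\top d_\gamma = 1$ zeroes out the penalty as well, so the objective at $\tau_*$ equals $0$.

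For the ``only if'' direction I first establish nonnegativity of the objective on the feasible set. The penalty is trivially nonnegative; for $D_\phi$ I normalize $\phi$ (without loss of generality, since adding a linear term $c(x-1)$ does not change the divergence between probability measures) so that $\phi'(1) = 0$. Convexity together with $\phi(1) = 0$ then forces $\phi \geq 0$ pointwise, with $\phi$ vanishing only at $1$, so $D_\phi(\mop\tau\|D\tau) = \sum_{(s,a)} D\tau(s,a)\phi(\mop\tau(s,a)/D\tau(s,a)) \geq 0$. Any optimal $\tau$ therefore attains objective value $0$ with both terms vanishing separately. The vanishing penalty gives $d_\mu^\top\tau = 1$; the vanishing divergence combined with the fact that $\phi$ vanishes only at $1$ forces $\mop\tau = D\tau$ entrywise. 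Setting $u = D\tau$, this rewrites as $(I - \gamma P_\pi^\top)u = (1-\gamma)\mu_0$. For $\gamma < 1$, $I - \gamma P_\pi^\top$ is invertible (spectral radius $\gamma$), so $u = d_\gamma$ uniquely and $\tau = D^{-1}u = \tau_*$. For $\gamma = 1$, the paper's standing ergodicity assumption makes $\ker(I - P_\pi^\top)$ one-dimensional and spanned by $d$, and the normalization $\mathbf{1}^\top u = d_\mu^\top\tau = 1$ pins the scale, again yielding $\tau = \tau_*$.

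The main technical obstacle is controlling the sign of $D_\phi$ on non-probability measures. When $d_\mu^\top\tau \neq 1$ the two measures $\mop\tau$ and $D\tau$ have unequal total mass ($\mathbf{1}^\top\mop\tau = (1-\gamma) + \gamma d_\mu^\top\tau$ versus $\mathbf{1}^\top D\tau = d_\mu^\top\tau$), so the textbook Jensen argument $D_\phi(q_1\|q_2) \geq (\sum q_2)\phi(\sum q_1/\sum q_2)$ does not immediately give $D_\phi \geq 0$. The $\phi'(1) = 0$ normalization above handles this cleanly, but one must verify that the claimed characterization of optima is invariant under this reparameterization, since adding $c(x-1)$ to $\phi$ modifies the divergence off the constraint surface by an amount linear in $d_\mu^\top\tau - 1$ that must be absorbed compatibly with the quadratic penalty. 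The $\gamma = 1$ uniqueness is a routine consequence of the paper's implicit ergodicity assumption and Perron--Frobenius.
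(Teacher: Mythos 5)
First, a framing point: the paper does not actually prove Lemma~\ref{lem:their_lemma} --- it is cited from \citet{zhang2020gendice}, and Section~3.1 of the present paper is devoted to arguing that the lemma \emph{fails} for general $f$-divergences once $D_\phi$ is read as a generic function on non-normalized measures. Your proof runs into exactly the obstacle that section identifies, at the step you flag but do not resolve: the ``WLOG $\phi'(1)=0$'' normalization. Writing $q_1 = \mop\tau$, $q_2 = D\tau$, replacing $\phi$ by $\tilde\phi(x) = \phi(x) - c(x-1)$ with $c = \phi'(1)$ changes the objective by $c\left(\sum_x q_1(x) - \sum_x q_2(x)\right) = c(1-\gamma)(1 - d_\mu^\top\tau)$, a term \emph{linear} in $t \doteq d_\mu^\top\tau - 1$. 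Near $t=0$ a linear term always dominates the quadratic penalty $\frac{\lambda}{2}t^2$, so the two cannot be ``absorbed compatibly'': the decomposition of the objective into a nonnegative part plus a part minimized at $\tau_*$ is destroyed, and with it the claim that an optimum must make both terms vanish separately.

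The gap is not repairable in general because the statement, under your reading, is false for the KL divergence with $\gamma<1$. Take $\tau = (1+\epsilon)\tau_*$. Then $D\tau = (1+\epsilon)d_\gamma$ and $\mop\tau = (1+\epsilon)d_\gamma - \epsilon(1-\gamma)\mu_0$, and a first-order expansion gives $D_{\mathrm{KL}}(\mop\tau \| D\tau) = -\epsilon(1-\gamma) + O(\epsilon^2)$ while the penalty contributes only $\frac{\lambda}{2}\epsilon^2$; the objective is strictly negative for small $\epsilon>0$, beating the value $0$ attained at $\tau_*$, so neither direction of the ``iff'' survives. The lemma does hold under either of the two readings the paper discusses: (i) $D_\phi$ carries the implicit constraint that its arguments are probability measures, in which case $d_\mu^\top\tau = 1$ is forced and your Jensen/nonnegativity argument applies with no normalization needed (but the penalty is then vacuous); or (ii) the generator satisfies $\phi \geq 0$ pointwise, so that $D_\phi \geq 0$ even off the constraint surface (e.g., $\chi^2$ with $\phi(x)=(x-1)^2$), together with $\tau \succeq 0$. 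A correct proof must assume one of these; it cannot manufacture nonnegativity by shifting $\phi$. Your uniqueness argument ($I - \gamma P_\pi^\top$ invertible for $\gamma<1$; Perron--Frobenius plus the normalization $d_\mu^\top\tau=1$ for $\gamma=1$) is fine and matches the argument the paper gives for its own Theorem~\ref{thm:gradient_dice_consistency}.
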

To make the optimization tractable and address a double sampling issue, \citet{zhang2020gendice} rewrite $\phi(x)$ as $\phi(x) = \max_{f \in \R} xf - \frac{1}{2}\phi^*(f)$, where $\phi^*$ is the Fenchel conjugate of $\phi$, and use the interchangeability principle for interchanging maximization and expectation \citep{shapiro2014lectures,dai2016learning},
yielding the following problem, 
which is equivalent to~\eqref{pb:penalty_problem}:
\begin{align}
\label{pb:fake_ccsp}
\min_{\tau \in \R^{N_{sa}}, \tau \succeq 0} \quad \max_{f \in \R^{N_{sa}}, \eta \in \R} J(\tau, f, \eta),
\end{align}
where 
\label{eq:def-J}
\begin{align}
&J(\tau, f, \eta) \\
\doteq& (1 - \gamma) \E_{\mu_0}[f(s, a)] + \gamma \E_p[\tau(s, a) f(s^\prime, a^\prime)] - \\
&\E_{d_\mu}[\tau(s, a) \phi^*(f(s, a))] + \lambda \Big(\E_{d_\mu}[\eta \tau(s, a) - \eta] - \textstyle{\frac{\eta^2}{2}} \Big).
\end{align}
Here $\E_{\mu_0}, \E_{d_\mu}, \E_p$ are shorthand for $\E_{(s, a) \sim {\mu_0}}$, $\E_{(s, a) \sim d_\mu}$, $\E_{(s, a)\sim d_\mu, s^\prime \sim p(\cdot | s, a), a^\prime \sim \pi(\cdot | s^\prime)}$ respectively.
\citet{zhang2020gendice} show $J$ is convex in $\tau$ and concave in $\eta, f$, i.e., \eqref{pb:fake_ccsp} is a convex-concave saddle-point problem.
\citet{zhang2020gendice} therefore use a primal-dual algorithm (i.e, perform stochastic gradient ascent on $\eta, f$ and stochastic gradient descent on $\tau$) to find the saddle-point,
yielding GENeralized stationary DIstribution Correction Estimation (GenDICE). 

\section{Problems with GenDICE}

\subsection{Use of Divergences as the Objective}
\citet{zhang2020gendice} proposes to consider a family of divergences, the $f$-divergences.
However, $f$-divergences are defined between probability measures.
So $D_\phi$ in \eqref{pb:penalty_problem} implicitly requires its arguments to be valid probability measures. Consequently, \eqref{pb:penalty_problem}
 still has the implicit constraint that $d_\mu^\top \tau = 1$.
The main motivation for \citet{zhang2020gendice} to transform \eqref{pb:original_problem} into \eqref{pb:penalty_problem} is to get rid of this equality constraint. By using divergences, they do not really get rid of it.
When this implicit constraint is considered, the problem~\eqref{pb:penalty_problem} is still hard to optimize, as discussed in \citet{zhang2020gendice}.

We can, of course, just ignore this implicit constraint and interpret $D_\phi$ as a generic function instead of a divergence.
Namely, we do not require its arguments to be valid probability measures.
In this scenario, however, there is no guarantee that $D_\phi$ is always nonnegative, which plays a central role in proving Lemma~\ref{lem:their_lemma}'s claim that GenDICE is consistent.
Consider, for example, the KL-divergence, where $\phi(x) = x\log(x)$. If $q_2(x) > q_1(x) > 0$ holds for all $x$ (which is impossible when $q_1$ and $q_2$ are probability measures), 
clearly we have $D_\phi(q_1 || q_2) < 0$.
While \citet{zhang2020gendice} propose not to use KL divergence due to numerical instability, here we provide a more principled explanation that if KL divergence is used, Lemma~\ref{lem:their_lemma} does not necessarily hold.  
\citet{zhang2020gendice} propose to use $\chi^2$-divergence instead.
Fortunately, $\chi^2$-divergence has the property that $q_1(x) > 0 \wedge q_2(x) > 0$ implies $D_\phi(q_1 || q_2 ) \geq 0$,
even if $q_1$ and $q_2$ are not probability measures.
This property ensures Lemma~\ref{lem:their_lemma} holds even we just consider $D_\phi$ as a generic function instead of a divergence. 
But not all divergences have this property.
Moreover, even if $\chi^2$-divergence is considered, $\tau \succeq 0$ is still necessary for Lemma~\ref{lem:their_lemma} to hold.
This requirement ($\tau \succeq 0$) is also problematic, as discussed in the next section.

To summarize, we argue that \emph{$f$-divergence is not a good choice to form the optimization objective for density ratio learning}.

\subsection{Use of Primal-Dual Algorithms as the Solver}
We assume $\tau$, $f$ are parameterized by $\theta^{(1)}, \theta^{(2)}$.
As \eqref{pb:fake_ccsp} requires $\tau(s, a) \geq 0$,
\citet{zhang2020gendice} propose to add extra nonlinearity, e.g., $(\cdot)^2, \log(1 + \exp(\cdot))$, or $\exp(\cdot)$, in the parameterization of $\tau$.
Plugging the approximation in~\eqref{pb:fake_ccsp} yields
$\min_{\theta^{(1)}} \max_{\theta^{(2)}, \eta} J(\tau_{\theta^{(1)}}, \eta, f_{\theta^{(2)}})$.
Here $\tau_{\theta^{(1)}}$ and $f_{\theta^{(2)}}$ emphasize that $\tau$ and $f$ are parameterized functions.

There is now a contradiction.
On the one hand, $J(\tau_{\theta^{(1)}}, \eta, f_{\theta^{(2)}})$ is \emph{not} necessarily CCSP when nonlinearity is introduced in the parameterization.
In the definition of $J$ in \eqref{pb:fake_ccsp}, the sign of $\tau$ depends on $f$ and $\eta$. 
Unless $\tau$ is linear in $\theta^{(1)}$, the convexity of $J$ w.r.t.\ $\theta^{(1)}$ is in general hard to analyze \citep{boyd2004convex},
even we just add $(\cdot)^2$ after a linear parameterization.
Although \citet{zhang2020gendice} demonstrate great empirical success from a primal-dual algorithm,
this optimization procedure is \emph{not} theoretically justified as $J(\tau_{\theta^{(1)}}, \eta, f_{\theta^{(2)}})$ is not necessarily a convex-concave function. 
On the other hand, if we do not apply any nonlinearity in $\theta^{(1)}$, 
there is no guarantee that $\tau_{\theta^{(1)}}(s, a) > 0$ even with a tabular representation. 
Then Lemma~\ref{lem:their_lemma} does not necessarily hold, and GenDICE is not necessarily consistent.

To summarize, \emph{applying the primitive primal-dual algorithm for the GenDICE objective is not theoretically justified, even with a tabular representation}.

\subsection{Projection and Self-Normalization}

Besides applying nonlinearity, one may also consider projection to account for the constraint $\tau_{\theta^{(1)}}(s, a) > 0$, 
i.e., we project $\theta^{(1)}$ back to $\Theta \doteq \{\theta^{(1)} | \tau_{\theta^{(1)}}(s, a) > 0\}$. 
One direct instantiation of this idea is Projected Stochastic Gradient Descent (PSGD).
With nonlinear function approximation, it is not clear how to achieve this.
With tabular or linear representation, 
such projection reduces to inequality-constrained quadratic programming,
solving which usually involves subroutine numerical optimization,
making it very computationally expensive.
Moreover, 
the number of constraints grows linearly w.r.t. the number of states (not state features), 
indicating such projection does not scale well. 
Stochastic Mirror Descent (SMD, \citealt{mid:beck2003}) with
generalized KL-divergence is also a possible way to 
to achieve such a projection.
SMD, as well as PSGD, walks $\theta^{(1)}$ in the positive orthant.
To ensure $\tau_{\theta^{(1)}}(s, a) > 0$,
they also require positive features.
However, 
it is possible that the oracle $\tau$ lies in the feature space but the optimal $\theta^{(1)}$ does not lie in the positive orthant,
indicating SMD and PSGD can yield arbitrarily large optimization errors.

Self-normalization \citep{liu2018breaking,zhang2020gendice} is also an approach to ensure nonlinearity,
where we normalize the $\tau$ prediction over all possible state-action pairs.
Recently, \citet{mousavi2020blackbox} propose an objective with Maximum Mean Discrepancy and use self-normalization.
However, self-normalization usually generates biased solutions and is computationally prohibitive for large datasets (see Appendix E.3 in \citet{zhang2020gendice}).
Moreover, self-normalization in general yields non-convex objectives even with tabular or linear representation (e.g., the objective in \citet{mousavi2020blackbox} is non-convex),
rendering difficulties in optimization.

To summarize, we argue that \emph{neither projection nor self-normalization is a theoretically justified solution for the problems of GenDICE}.

\subsection{A Hard Example for GenDICE}
\begin{figure}[h]
\centering
\includegraphics[width=0.15\textwidth]{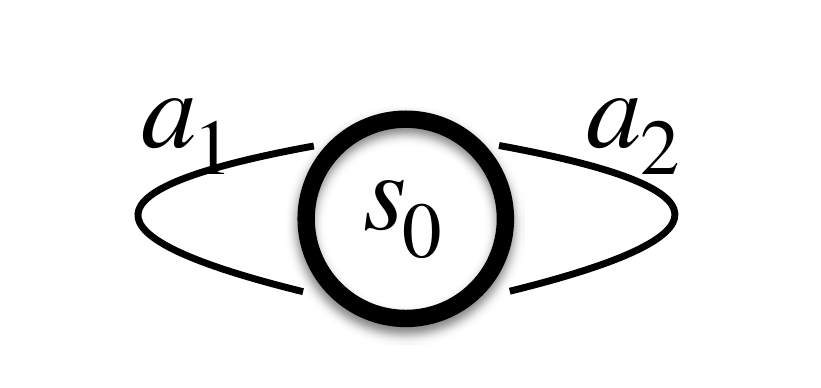}
\caption{\label{fig:single-state-MDP} A single-state MDP.}
\end{figure}

We now provide a concrete example to demonstrate the defects of GenDICE.
We consider a single state MDP (Figure~\ref{fig:single-state-MDP}) with two actions, 
both of which lead to that single state.
We set $\gamma = 1, d_\mu(s_0, a_1) = d_\mu(s_0, a_2) = \pi(a_1 | s_0) = \pi(a_2 | s_0) = 0.5$. 
Therefore, we have $\mu_0(s_0, a_1) = \mu_0(s_0, a_2) = 0.5$.
Under this setting, it is easy to verify that $\tau_* = [1, 1]^\top$.
We now instantiate \eqref{pb:fake_ccsp} with a $\chi^2$-divergence and $\lambda = 1$ as recommended by \citet{zhang2020gendice},
where $\phi^*(x) = x + \frac{x^2}{4}$.
To solve~\eqref{pb:fake_ccsp}, we need $\tau \succeq 0$.
As suggested by \citet{zhang2020gendice}, we use $(\cdot)^2$ nonlinearity.
Namely, we define $\tau(s_0, a_1) = \tau_1^2, \tau(s_0, a_2) = \tau_2^2, f(s_0, a_1) = f_1, f(s_0, a_2) = f_2$.
Now our optimization variables are $\tau_1, \tau_2, f_1, f_2, \eta$.
It is easy to verify that at the point $(\tau_1, \tau_2, f_1, f_2, \eta) = (0, 0, 0, 0, -1)$,
we have
$\frac{\partial J(\tau, f, \eta)}{\partial \tau_1} = \frac{\partial J(\tau, f, \eta)}{\partial \tau_2} = \frac{\partial J(\tau, f, \eta)}{\partial f_1} = \frac{\partial J(\tau, f, \eta)}{\partial f_2} = \frac{\partial J(\tau, f, \eta)}{\partial \eta} = 0$,
indicating GenDICE stops at this point if the true gradient is used.
However, $[0, 0]^\top$ is obviously not the optimum.
Details of the computation are provided in the appendix.
This suboptimality results from the fact that once nonlinearity is introduced in $J(\tau, f, \eta)$, it is not convex-concave even with a tabular representation.
Although this example is trivial to solve,
it numerically verifies the problems of GenDICE.

\section{GradientDICE}
As discussed above, the problems with GenDICE come mainly from the formulation of~\eqref{pb:penalty_problem},
namely the constraint $\tau \succeq 0$ and the use of the divergence $D_\phi$.
We eliminate both 
by considering the following problem instead:
\begin{align}
\label{pb:our}
\min_{\tau \in \R^{N_{sa}}} \, L(\tau) \doteq \frac{1}{2}||\mop \tau - D\tau||^2_{D^{-1}} + \frac{\lambda}{2}(d_\mu^\top \tau - 1)^2,
\end{align}

where $\lambda > 0$ is a constant and $||y||^2_\Xi$ stands for $y^\top \Xi y$.
Readers familiar with 
Gradient TD methods \citep{sutton2009fast,sutton2009convergent} or residual gradients \citep{baird1995residual} may find the first term of this objective similar to the Mean Squared Bellman Error (MSBE).
However, while in MSBE the norm is induced by $D$,
 we consider a norm induced by $D^{-1}$.
This norm is carefully designed and provides  expectations that we can sample from,
which will be clear once $L(\tau)$ is expanded (see Eq~\eqref{eq:L_tau} below).
Remarkably, we have:
\begin{theorem}
\label{thm:gradient_dice_consistency}
$\tau$ is optimal for \eqref{pb:our} iff $\tau = \tau_*$.
\end{theorem}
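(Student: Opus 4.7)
The plan is to prove both directions by exploiting that $L$ is a sum of two nonnegative quadratic terms, so optimality is equivalent to each term vanishing individually.

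First I would establish the easy ``if'' direction. Since $d_\mu(s,a) > 0$ for all $(s,a)$, the weighted norm $\|\cdot\|_{D^{-1}}^2$ is a genuine norm and so $L(\tau) \geq 0$ for every $\tau$. By the identity $D\tau_* = \mop \tau_*$ from \eqref{eq:tau_recursion}, the first term of $L(\tau_*)$ vanishes. For the second term, I note that $d_\mu^\top \tau_* = \sum_{s,a} d_\mu(s,a)\,\tau_*(s,a) = \sum_{s,a} d_\gamma(s,a) = 1$ since $d_\gamma$ is a probability distribution (on the extended range $\gamma \in [0,1]$ per the paper's convention). Hence $L(\tau_*) = 0$, making $\tau_*$ a global minimizer.

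For the ``only if'' direction, suppose $\tau$ is optimal. Combined with $L(\tau_*) = 0$ and $L \geq 0$, this forces both terms to vanish:
\begin{align}
\mop \tau = D\tau \quad \text{and} \quad d_\mu^\top \tau = 1.
\end{align}
I would then split on the value of $\gamma$. When $\gamma \in [0,1)$, the first equation rearranges to $(I - \gamma P_\pi^\top) D\tau = (1-\gamma)\mu_0$, and since $\gamma \rho(P_\pi) = \gamma < 1$ the matrix $I - \gamma P_\pi^\top$ is invertible; comparing with the same identity applied to $\tau_*$ gives $D\tau = D\tau_*$, and invertibility of $D$ yields $\tau = \tau_*$. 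The normalization constraint is automatically satisfied here and is redundant.

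The main obstacle is the average-reward case $\gamma = 1$, which is precisely where the paper's appeal to the Perron-Frobenius theorem becomes essential. Here the first equation collapses to $P_\pi^\top (D\tau) = D\tau$, so $D\tau$ is a left $1$-eigenvector of the stochastic matrix $P_\pi$. Under the standing ergodicity assumption made in the Background, the eigenvalue $1$ is simple, so by Perron-Frobenius the left eigenspace is one-dimensional and spanned by the unique stationary distribution $d_1$. Thus $D\tau = c\, d_1$ for some scalar $c$. The normalization constraint pins down $c$: since $d_\mu^\top \tau = \mathbf{1}^\top D\tau = c\, \mathbf{1}^\top d_1 = c$, the requirement $d_\mu^\top \tau = 1$ forces $c = 1$, giving $D\tau = d_1$ and hence $\tau = D^{-1} d_1 = \tau_*$. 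This completes both directions and explains why the carefully chosen $D^{-1}$-weighting (rather than the MSBE-like $D$-weighting) together with the quadratic penalty on $d_\mu^\top \tau - 1$ is exactly what is needed to uniquely identify $\tau_*$ across both the discounted and the average-reward regimes.
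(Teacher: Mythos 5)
Your proof is correct and follows essentially the same route as the paper: nonnegativity of $L$ together with $L(\tau_*)=0$ forces both terms to vanish, invertibility of $I-\gamma P_\pi^\top$ handles $\gamma<1$, and the Perron--Frobenius theorem plus the normalization $d_\mu^\top\tau=1$ handles $\gamma=1$. The only difference is that you spell out details the paper leaves implicit (e.g., why both terms must vanish at an optimum), which is fine.
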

\begin{proof}
Sufficiency: Obviously $\tau_*$ is optimal. \\
Necessity: (a) $\gamma < 1$: 
In this case $I - \gamma P_\pi^\top$ is nonsingular. 
The linear system $D \tau - \mop \tau = 0$ has only one solution, 
we must have $\tau = \tau_*$.
(b) $\gamma=1$: If $\tau$ is optimal, we have $d_\mu^\top \tau = 1$ and $D\tau = P_\pi^\top D\tau$,
i.e., $D\tau$ is a left eigenvector of $P_\pi$ associated with the 
Perron-Frobenius eigenvalue 1.
Note $d_{\gamma}$ is also a left eigenvector of $P_\pi$ associated with the eigenvalue 1.
According to the Perron-Frobenius theorem for nonnegative irreducible matrices \citep{horn2012matrix}, 
the left eigenspace of the Perron-Frobenius eigenvalue is 1-dimensional.
Consequently, there exists a scalar $\alpha$ such that $D\tau = \alpha d_\gamma$.
On the other hand,
$\alpha = \alpha 1^\top d_\gamma = 1^\top D\tau = d_\mu^\top \tau = 1$, implying
$D\tau = d_\gamma$, i.e., $\tau = \tau_*$.
\end{proof}
\begin{remark}
Unlike the problem formulation in \citet{zhang2020gendice} (see \eqref{pb:original_problem} and \eqref{pb:penalty_problem}), 
we do not use $\tau \succeq 0$ as a constraint and can still guarantee there is no degenerate solution.
Eliminating this constraint is key to eliminating nonlinearity.
Although the Perron-Frobenius theorem can also be used in the formulation of \citet{zhang2020gendice}, 
their use of the divergence $D_\phi$ still requires $\tau \succeq 0$.
\end{remark}
With $\delta = \mop \tau - D\tau$,
we have 
\begin{align}
\label{eq:L_tau}
L(\tau) &\doteq \frac{1}{2} \E_{d_\mu}[ \big( \frac{\delta(s, a)}{d_\mu(s, a)} \big)^2 ] + \frac{\lambda}{2}(d_\mu^\top \tau - 1)^2 \\
&=\max_{f \in \R^{N_{sa}}} \E_{d_\mu}[ \frac{\delta(s, a)}{d_\mu(s, a)} f(s, a) - \frac{1}{2}f(s, a)^2] \\
& \quad + \lambda \max_{\eta \in \R} \big( \E_{d_\mu}[\eta \tau(s, a) - \eta] - \frac{\eta^2}{2} \big),
\end{align}
where the equality comes also from the Fenchel conjugate and the interchangeability principle as in \citet{zhang2020gendice}.
We, therefore, consider the following problem
\begin{align}
&\min_{\tau \in \R^{N_{sa}}} \,\, \max_{f \in \R^{N_{sa}}, \eta \in \R} \Big( L(\tau, \eta, f) \\
\doteq& \E_{d_\mu}[ \textstyle{\frac{\delta(s, a)}{d_\mu(s, a)}} f(s, a) - \frac{1}{2}f(s, a)^2] \\
&+ \lambda \big( \eta(\E_{d_\mu}[\tau(s, a) - 1]) - \textstyle{\frac{\eta^2}{2}} \big)\\
=& (1 - \gamma) \E_{\mu_0}[f(s, a)] + \gamma \E_p[\tau(s, a) f(s^\prime, a^\prime)] \\
&- \E_{d_\mu}[\tau(s, a)f(s, a)] - \frac{1}{2}\E_{d_\mu}[f(s, a)^2] \\
&+ \lambda \big( \E_{d_\mu}[\eta \tau(s, a) - \eta] - \textstyle{\frac{\eta^2}{2}} \big) \Big).
\end{align}
Here the equality comes from the fact that 
\begin{align}
\textstyle
\E_p[\tau(s, a) f(s^\prime, a^\prime)] = \sum_{s^\prime, a^\prime} (P_\pi^\top  D\tau)(s^\prime, a^\prime) f(s^\prime, a^\prime).
\end{align}
This problem is an \emph{unconstrained} optimization problem and $L$ is convex (linear) in $\tau$ and concave in $f, \eta$.
Assuming $\tau, f$ is parameterized by $w, \kappa$ respectively and
including ridge regularization for $w$ for reasons that will soon be clear,
we consider the following problem
\begin{align}
\label{pb:ccsp_linear_ridge}
\textstyle
\min_w \max_{\eta, \kappa} L(\tau_w, \eta, f_\kappa) + \frac{\xi}{2}||w||^2,
\end{align}
where $\xi \geq 0$ is a constant.
When a linear architecture is considered for $\tau_w$ and $f_\kappa$, the problem~\eqref{pb:ccsp_linear_ridge} is CCSP.
Namely, it is convex in $w$ and concave in $\kappa, \eta$.
We use $X \in \R^{N_{sa} \times K}$ to denote the feature matrix, 
each row of which is $x(s, a)$,
where $x: \mathcal{S} \times \mathcal{A} \rightarrow \R^{K}$ is the feature function.
Assuming $\tau_w \doteq Xw, f_\kappa \doteq X\kappa$,
we perform gradient descent on $w$ and gradient ascent on $\eta, \kappa$.
As we use techniques similar to Gradient TD methods to prove the convergence of our new algorithm, 
we term it Gradient stationary DIstribution Correction Estimation (GradientDICE): 
\begin{align}
\delta_t &\gets (1 - \gamma)x_{0,t} + \gamma x_t^\top w_t x^\prime_t - x_t^\top w_t x_t \\
\kappa_{t+1} &\gets \kappa_t + \alpha_t ( \delta_t  - x_t^\top \kappa_t x_t ) \\
\eta_{t+1} &\gets \eta_t + \alpha_t \lambda (x_t^\top w_t - 1 - \eta_t) \\
w_{t+1} &\gets w_t - \alpha_t \Big(\gamma x_t^{\prime\top} \kappa_t x_t - x_t^\top \kappa_t x_t + \lambda \eta_t x_t + \xi w_t \Big).
\end{align}
Here $x_{0, t} \doteq x(s_{0,t}, a_{0, t}), x_t \doteq x(s_t, a_t), x_t^\prime \doteq (s^\prime_t, a^\prime_t)$, $(s_{0,t}, a_{0, t}) \sim \mu_0, (s_t, a_t, s_t^\prime, a_t^\prime) \sim p$ (c.f. $\E_p$ in \eqref{pb:fake_ccsp}), $\{\alpha_t \}$ is a sequence of deterministic nonnegative nonincreasing learning rates satisfying the Robin-Monro's condition \citep{robbins1951stochastic}, i.e., $\sum_t \alpha_t = \infty, \sum_t \alpha_t^2 < \infty$.

\subsection{Convergence Analysis}
Let $d_t^\top \doteq [\kappa_t^\top, w_t^\top, \eta_t]$, we rewrite the GradientDICE updates as
$d_{t+1} \doteq d_t + \alpha_t (G_{t+1} d_t + g_{t+1})$,
where
\begin{align}
G_{t+1} &\doteq \begin{bmatrix}
-x_t^\top x_t & -(x_t - \gamma x_t^\prime)x_t^\top & 0 \\ 
x_t(x_t^\top - \gamma x_t^{\prime\top}) & -\xi I & -\lambda x_t \\
0 & \lambda x_t^\top & -\lambda
\end{bmatrix}, \\
g_{t+1} &\doteq \begin{bmatrix}
(1 - \gamma)x_{0, t} \\
0 \\
-\lambda \\
\end{bmatrix}.
\end{align}
Defining $A \doteq X^\top (I - \gamma P_\pi^\top) DX, C \doteq X^\top D X$, 
the limiting behavior of GradientDICE is governed by
\begin{align}
G &\doteq \E_p[G_t] = \begin{bmatrix}
-C & -A & 0 \\
A^\top & -\xi I & -\lambda X^\top d_\mu \\
0 & \lambda d_\mu^\top X & -\lambda
\end{bmatrix}, \\
\label{eq:expected_update}
g &\doteq \E_{\mu_0}[g_t] = \begin{bmatrix}
(1 - \gamma) X^\top \mu_0 \\
0 \\
-\lambda
\end{bmatrix}.
\end{align}
\begin{assumption}
\label{assu:nonsingularity}
$X$ has linearly independent columns.
\end{assumption}
\begin{assumption}
\label{assu:ridge}
A is nonsingular or $\xi > 0$.
\end{assumption}
\begin{assumption}
\label{assu:feature}
The features $\{x_{0, t}, x_t, x_t^\prime\}$ have uniformly bounded second moments.
\end{assumption}
\begin{remark}
Assumption~\ref{assu:nonsingularity} ensures $C$ is strictly positive definite.
When $\gamma < 1$, 
it is common to assume $A$ is nonsingular \citep{maei2011gradient},
the ridge regularization (i.e., $\xi > 0$) is then optional.
When $\gamma = 1$,
$A$ can easily be singular (e.g., in a tabular setting).
We, therefore, impose the extra ridge regularization.
Assumption~\ref{assu:feature} is commonly used in Gradient TD methods \citep{maei2011gradient}.
Assumptions~(\ref{assu:nonsingularity} - \ref{assu:feature}) are also used in previous density ratio learning literature for analyzing the optimization error, 
either explicitly (e.g., the Assumption 8 in Appendix D.4 in \citet{nachum2019dualdice})
or implicitly (e.g., Appendix C.3 in \citet{zhang2020gendice}).
\end{remark}
\begin{theorem}
\label{thm:gen_dice_plus_convergence}
Under Assumptions~(\ref{assu:nonsingularity}-\ref{assu:feature}), 
we have
$$\lim_{t \rightarrow \infty} d_t = -G^{-1}g \quad \text{almost surely}. $$
\end{theorem}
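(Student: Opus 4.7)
The plan is to apply the ODE method for linear stochastic approximation (e.g., Borkar, \emph{Stochastic Approximation: A Dynamical Systems Viewpoint}, 2008, Chapter 2). Rewrite the update as $d_{t+1} = d_t + \alpha_t(G d_t + g + M_{t+1})$ with $M_{t+1} := (G_{t+1} - G)d_t + (g_{t+1} - g)$. A direct computation using $\E[x_t x_t^\top] = C$ and $\E[x_t (x'_t)^\top] = X^\top D P_\pi X$ (together with the analogous identity for $x_{0,t}$) verifies $\E[G_{t+1}\mid \mathcal{F}_t] = G$ and $\E[g_{t+1}\mid \mathcal{F}_t] = g$, so $\{M_{t+1}\}$ is a martingale difference sequence with respect to the natural filtration $\{\mathcal{F}_t\}$; Assumption~\ref{assu:feature} then gives a bound $\E[\|M_{t+1}\|^2 \mid \mathcal{F}_t] \le K(1+\|d_t\|^2)$ for some constant $K$. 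The drift is affine, hence Lipschitz. The two non-routine ingredients are (i) $G$ is Hurwitz, so that $d^\star := -G^{-1}g$ is the unique globally asymptotically stable equilibrium of the mean-field ODE $\dot d = G d + g$, and (ii) stability of the iterates, $\sup_t \|d_t\| < \infty$ almost surely.

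For (i), I would compute the symmetric part
$$\tfrac{1}{2}(G + G^\top) = \begin{bmatrix} -C & 0 & 0 \\ 0 & -\xi I & 0 \\ 0 & 0 & -\lambda \end{bmatrix},$$
which is negative semidefinite since $C \succ 0$ by Assumption~\ref{assu:nonsingularity} and $\lambda > 0$. If $\xi > 0$, the symmetric part is strictly negative definite, so $\mathrm{Re}(v^* G v) < 0$ for every nonzero complex $v$ and the Hurwitz property is immediate. If $\xi = 0$, suppose $G v = \mu v$ with $\mathrm{Re}(\mu) \ge 0$; taking real parts of $v^* G v = \mu \|v\|^2$ and using the semidefinite bound forces the $\kappa$-block $v_1$ and the $\eta$-block $v_3$ of $v$ to vanish (because the diagonal blocks $-C$ and $-\lambda$ are strictly negative). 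The first block of $G v = \mu v$ then reduces to $-A v_2 = 0$, and Assumption~\ref{assu:ridge} (nonsingularity of $A$) forces $v_2 = 0$, a contradiction. Hence $G$ is Hurwitz, which also yields invertibility.

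For (ii), I would apply the Borkar--Meyn stability criterion. The rescaled drifts $h_c(d) := (G(cd) + g)/c = Gd + g/c$ converge to $h_\infty(d) = Gd$ as $c \to \infty$, and the limit ODE $\dot d = Gd$ has the origin as its unique globally asymptotically stable equilibrium via the quadratic Lyapunov function $V(d) = d^\top P d$, where $P \succ 0$ solves the Lyapunov equation $G^\top P + P G = -I$ (existence guaranteed by the Hurwitz property). The criterion then yields $\sup_t \|d_t\| < \infty$ almost surely. Combining iterate stability with global asymptotic stability of $d^\star$ under the mean-field ODE, the standard ODE-method convergence theorem gives $d_t \to -G^{-1}g$ almost surely.

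The principal obstacle is the $\xi = 0$ subcase of the Hurwitz analysis: the symmetric part of $G$ is only negative \emph{semi}definite, so the quick quadratic-form argument does not directly conclude, and one must exploit the off-diagonal coupling through $A$, together with Assumption~\ref{assu:ridge}, to rule out eigenvalues on the imaginary axis.
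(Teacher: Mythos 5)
Your proposal is correct and follows essentially the same route as the paper: rewrite the iteration as linear stochastic approximation with martingale-difference noise, show the key fact that $G$ is Hurwitz via the real part of $v^* G v$ (equivalently its block-diagonal symmetric part $\mathrm{diag}(-C,-\xi I,-\lambda)$), and then invoke the standard ODE/Borkar--Meyn machinery, which the paper does by deferring to the proof of Theorem 2 in \citet{maei2011gradient}. The only organizational difference is that you exclude eigenvalues on the imaginary axis (including $0$) by a direct eigenvector argument using the nonsingularity of $A$ when $\xi=0$, whereas the paper handles nonzero eigenvalues by the same quadratic-form computation and rules out a zero eigenvalue separately through a block-determinant evaluation of $\det(G)$ under Assumption~\ref{assu:ridge} --- the substance is the same.
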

We provide a detailed proof of Theorem~\ref{thm:gen_dice_plus_convergence} in the appendix, 
which is inspired by \citet{sutton2009fast}.
One key step in the proof is to show that the real parts of all eigenvalues of $G$ are strictly negative.
The $G$ in \citet{sutton2009fast} satisfies this condition easily.
However, for our $G$ to satisfy this condition when $\gamma = 1$, 
we must have $\xi > 0$,
which motivates the use of ridge regularization.

With simple block matrix inversion expanding $G^{-1}$, we have
$\lim_{t \rightarrow \infty} w_t = w_{\infty, \xi}$,
where
\begin{align}
\label{eq:w_infinity}
w_{\infty, \xi} &\doteq (1 - \gamma)\Xi A^\top C^{-1}X^\top \mu_0 \\
& \quad+ \lambda \beta^{-1}z[1 - (1 - \gamma) z^\top A^\top C^{-1} X^\top \mu_0]\\
\Xi &\doteq (\xi I + A^\top C^{-1}A)^{-1}, \\
z &\doteq  \Xi X^\top d_\mu, \quad \beta \doteq 1 + \lambda d_\mu^\top X \Xi X^\top d_\mu.
\end{align}
The maximization step in~\eqref{pb:ccsp_linear_ridge} is quadratic (with linear function approximation) and
thus can be solved analytically.
Simple algebraic manipulation together with Assumption~\ref{assu:nonsingularity} shows that this quadratic problem has a unique optimizer for all $\gamma \in [0, 1]$.
Plugging the analytical solution for the maximization step in~\eqref{pb:ccsp_linear_ridge}, the
KKT conditions then state that the optimizer $w_{*, \xi} $ for the minimization step must satisfy $A_{*, \xi} w_{*, \xi} = b_*$, where
\begin{align}
A_{*, \xi} &\doteq A^\top C^{-1}A + \lambda X^\top d_\mu d_\mu^\top X + \xi I, \\
b_* &\doteq (1 - \gamma) A^\top C^{-1} X^\top \mu_0 + \lambda X^\top d_\mu.
\end{align}
Assumption~\ref{assu:ridge} ensures $A_{*, \xi}$ is nonsingular.
Using the Sherman-Morrison formula \citep{sherman1950adjustment},
it is easy to verify $w_{*, \xi} = w_{\infty, \xi}$. 
For a quick sanity check, 
it is easy to verify that $w_{\infty, 0} = \tau_*$ holds when $\gamma < 1$, $X = I$, and $\xi = 0$,
using the fact $(1-\gamma)1^\top (I - \gamma P_\pi^\top) \mu_0 = 1$.

\subsection{Consistency Analysis}
To ensure convergence, 
we require ridge regularization in~\eqref{pb:ccsp_linear_ridge} for the setting $\gamma = 1$.
The asymptotic solution $w_{\infty, \xi}$ is therefore biased. 
We now study the regularization path consistency for the setting $\gamma = 1$, i.e., 
we study the behavior of $w_{\infty, \xi}$ when $\xi$ approaches 0.


\textbf{Case 1:} $\tau_* \in col(X)$. 
Here $col(\cdot)$ indicates the column space.
As $X$ has linearly independent columns (Assumption~\ref{assu:nonsingularity}),
we use $w_*$ to denote the unique $w$ satisfying $X w = \tau_*$. 
As $\gamma = 1$, $A$ can be singular.
Hence both $w_{\infty, 0}$ and $A_{*, 0}^{-1}$ can be ill-defined.
We now show under some regularization, 
we still have the desired consistency.
As $A^\top C^{-1}A$ is always positive semidefinite,
we consider its eigendecomposition $A^\top C^{-1}A = Q^\top \Lambda Q$, 
where $Q$ is an orthogonal matrix, $\Lambda \doteq diag([\lambda_1, \cdots, \lambda_r, 0, \cdots, 0])$, $r$ is the rank of $A^\top C^{-1}A$, $\lambda_i > 0$ are eigenvalues.
Let $u \doteq QX^\top d_\mu$, we have 
\begin{proposition}
Assuming $XC^{-1}X^\top$ is positive definite, $||u_{r+1:N_{sa}}|| \neq 0$, then $\lim_{\xi \rightarrow 0} w_{\infty, \xi} = w_{*},$ where $u_{i:j}$ denotes the vector consisting of the elements indexed by $i, i+1, \dots, j$ in the vector $u$.
\end{proposition}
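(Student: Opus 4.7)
The plan is to plug the eigendecomposition $A^\top C^{-1} A = Q^\top \Lambda Q$ into the closed-form \eqref{eq:w_infinity} and track which spectral components survive as $\xi \to 0$, and then identify the surviving part with $w_*$ via the same Perron--Frobenius argument used in the proof of Theorem~\ref{thm:gradient_dice_consistency}.

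First I would specialize \eqref{eq:w_infinity} to $\gamma=1$: the $(1-\gamma)$ terms drop out and $w_{\infty,\xi} = \lambda \beta^{-1} z = \lambda \beta^{-1}\, Q^\top (\xi I + \Lambda)^{-1} u$, with $\beta = 1 + \lambda\, u^\top(\xi I + \Lambda)^{-1} u$. The only singular entries of $(\xi I + \Lambda)^{-1}$ are the last $N_{sa}-r$, which are $1/\xi$. The hypothesis $\|u_{r+1:N_{sa}}\| \neq 0$ then gives $\xi \beta \to \lambda \|u_{r+1:N_{sa}}\|^2 > 0$, so $\beta^{-1} = \Theta(\xi)$. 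Multiplying coordinate-wise in the $Q$-basis, the first $r$ components of $\lambda \beta^{-1} (\xi I+\Lambda)^{-1} u$ are $O(\xi)$ and vanish, while for $i>r$ the factor $1/\xi$ is exactly cancelled by $\beta^{-1}$, leaving $u_i/\|u_{r+1:N_{sa}}\|^2$. Hence $\lim_{\xi \to 0} w_{\infty,\xi} = Q_2^\top u_{r+1:N_{sa}}/\|u_{r+1:N_{sa}}\|^2$, where $Q_2$ denotes the bottom $(N_{sa}-r)$ rows of $Q$.

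Next I would identify this limit with $w_*$. Positive-definiteness of $XC^{-1}X^\top$ combined with Assumption~\ref{assu:nonsingularity} forces $K = N_{sa}$, making $X$ square and invertible. With $\gamma = 1$, \eqref{eq:tau_recursion} gives $D\tau_* = P_\pi^\top D \tau_*$, so $A w_* = X^\top(I - P_\pi^\top) D \tau_* = 0$. Since $C^{-1}$ is positive definite, $\mathrm{ker}(A^\top C^{-1}A) = \mathrm{ker}(A) = \mathrm{col}(Q_2^\top)$, so $w_* \in \mathrm{col}(Q_2^\top)$. The Perron--Frobenius step used in the proof of Theorem~\ref{thm:gradient_dice_consistency} says the left $1$-eigenspace of $P_\pi$ is one-dimensional, so $\mathrm{ker}(A)$ is one-dimensional, $N_{sa}-r = 1$, and $Q_2^\top$ is a single column. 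Writing $w_* = Q_2^\top \alpha$ with $\alpha$ a scalar, the normalization $d_\mu^\top \tau_* = 1$ reads $u^\top Q w_* = u_{N_{sa}}\, \alpha = 1$, giving $\alpha = u_{N_{sa}}/\|u_{r+1:N_{sa}}\|^2$, which matches the limit above.

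The main obstacle is the two-regime cancellation in the first step: verifying that $\beta^{-1}$ is of exactly the right order $\Theta(\xi)$ to simultaneously kill the bounded first $r$ components and leave the $1/\xi$-singular components finite with precisely the weights $u_i/\|u_{r+1:N_{sa}}\|^2$. After that, the match with $w_*$ is essentially forced: positive-definiteness of $XC^{-1}X^\top$ collapses the setting to an invertible $X$, Perron--Frobenius provides the one-dimensionality of $\mathrm{ker}(A)$, and $d_\mu^\top \tau_* = 1$ pins down the one remaining scalar.
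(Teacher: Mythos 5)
Your proposal is correct, and it differs from the paper's proof in its final step rather than its starting point. Both arguments specialize $w_{\infty,\xi}$ from \eqref{eq:w_infinity} to $\gamma=1$ and insert the eigendecomposition $A^\top C^{-1}A = Q^\top \Lambda Q$; the paper, however, never computes the limit of $w_{\infty,\xi}$ itself. Instead it evaluates the two residuals $L_1(w)=d_\mu^\top Xw-1$ and the $XC^{-1}X^\top$-weighted Bellman-type residual $L_2(w)$ at $w_{\infty,\xi}$, shows via L'Hopital that both vanish as $\xi\to 0$, and concludes because, by positive definiteness of $XC^{-1}X^\top$ and the Perron--Frobenius argument of Theorem~\ref{thm:gradient_dice_consistency}, $w_*$ is the unique zero of $(L_1,L_2)$. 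You instead compute $\lim_{\xi\to 0}w_{\infty,\xi}$ coordinate-wise in the $Q$-basis --- the cancellation $\xi\beta \to \lambda\|u_{r+1:N_{sa}}\|^2>0$, hence $\beta^{-1}=\Theta(\xi)$, is exactly right --- and then identify the limit with $w_*$ by noting that the hypotheses force $K=N_{sa}$ and $X$ invertible, that $\ker(A^\top C^{-1}A)=\ker(A)$ is one-dimensional by Perron--Frobenius (so $r=N_{sa}-1$ and $u_{r+1:N_{sa}}$ is a single entry, making your $\alpha=u_{N_{sa}}/\|u_{r+1:N_{sa}}\|^2$ the same as $1/u_{N_{sa}}$), and that $d_\mu^\top\tau_*=1$ pins the remaining scalar; your reliance on irreducibility of $P_\pi$ is the same implicit ergodicity assumption the paper already invokes for $\gamma=1$. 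What your route buys: an explicit closed form for the limit, transparency that the positive-definiteness hypothesis effectively restricts to square invertible features, and it sidesteps the small step the paper leaves implicit (vanishing residuals imply convergence of $w_{\infty,\xi}$ to the unique root only after a continuity/boundedness argument). What the paper's route buys: less linear algebra at the end, since the uniqueness characterization of $w_*$ through $(L_1,L_2)$ does the identification work in one stroke.
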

\begin{proof}
According to the Perron-Frobenius theorem (c.f. the proof of Theorem~\ref{thm:gradient_dice_consistency}), 
it suffices to show
\begin{align}
\textstyle
&\lim_{\xi \rightarrow 0} L_1(w_{\infty, \xi}) = \lim_{\xi \rightarrow 0} L_2(w_{\infty, \xi}) = 0, \\
&L_1(w_{\infty, \xi}) \doteq d_\mu^\top X w_{\infty, \xi} - 1, \\
&L_2(w_{\infty, \xi}) \doteq ||DXw_{\infty, \xi} - P_\pi^\top XDw_{\infty, \xi}||_{XC^{-1}X^\top}^2,
\end{align}
as $w_*$ is the only $w$ satisfying $L_1(w) = L_2(w) = 0$.
With the eigendecomposition of $A^\top C^{-1}A$, we can compute $\Xi$ explicitly.
Simple algebraic manipulation then yields
\begin{align}
L_1(w_{\infty, \xi}) &= \textstyle{\frac{\lambda u^\top \Lambda_\xi u}{1 + \lambda u^\top \Lambda_\xi u}} - 1, \\
L_2(w_{\infty, \xi}) &= \textstyle{\frac{\lambda^2 u^\top \Lambda_\xi u}{(1 + \lambda u^\top \Lambda_\xi u)^2}} + \textstyle{\frac{\lambda^2 \xi u^\top \Lambda_\xi^2 u}{(1 + \lambda u^\top \Lambda_\xi u)^2}},
\end{align}
where $\Lambda_\xi \doteq diag([\frac{1}{\xi + \lambda_1}, \dots, \frac{1}{\xi + \lambda_r}, \frac{1}{\xi}, \cdots, \frac{1}{\xi}])$.
The desired limits then follow from the L'Hopital's rule.
\end{proof}
\begin{remark}
The assumption $||u_{r+1:N_{sa}}|| \neq 0$ is not restrictive as it is independent of learnable parameters and mainly controlled by features.
Requiring $XC^{-1}X^\top$ to be positive definite is more restrictive, but it holds at least for the tabular setting (i.e., $X = I$). 
The difficulty of the setting $\gamma = 1$ comes mainly from the fact that the objective of the minimization step in the problem \eqref{pb:ccsp_linear_ridge} is no longer strictly convex when $\xi = 0$ (i.e., $A_{*, 0}$ can be singular).
Thus there may be multiple optima for this minimization step, only one of which is $w_{*}$.
Extra domain knowledge (e.g., assumptions in the proposition statement) is necessary to ensure the regularization path converges to the desired optimum.
We provide a sufficient condition here and leave the analysis of necessary conditions for future work.
\end{remark}

\textbf{Case 2:} $\tau_* \notin col(X).$ 
In this scenario, it is not clear how to define $w_{*}$. 
The minimization step in \eqref{pb:ccsp_linear_ridge} can have multiple optima, and it is not clear which one is the best.
To analyze this scenario, we need to explicitly define projection in the optimization objective like Mean Squared Projected Bellman Error \citep{sutton2009fast}, 
instead of using an MSBE-like objective.
We leave this for future work.

\begin{figure*}[h]
\centering
\includegraphics[width=0.8\textwidth]{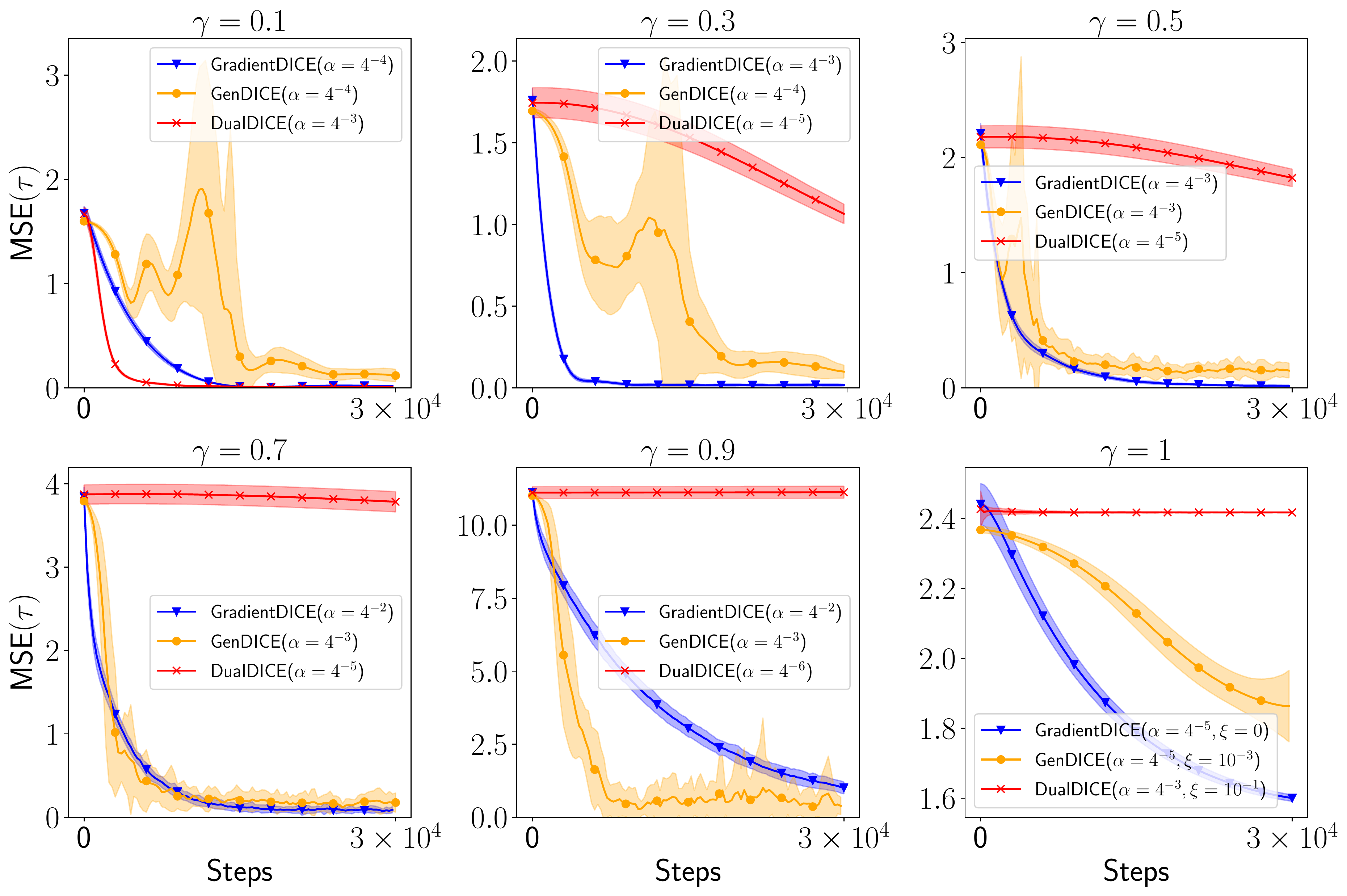}
\caption{\label{fig:boyans_chain_tabular} Density ratio learning in Boyan's Chain with a tabular representation.}
\end{figure*}

\begin{figure*}[h]
\centering
\includegraphics[width=0.8\textwidth]{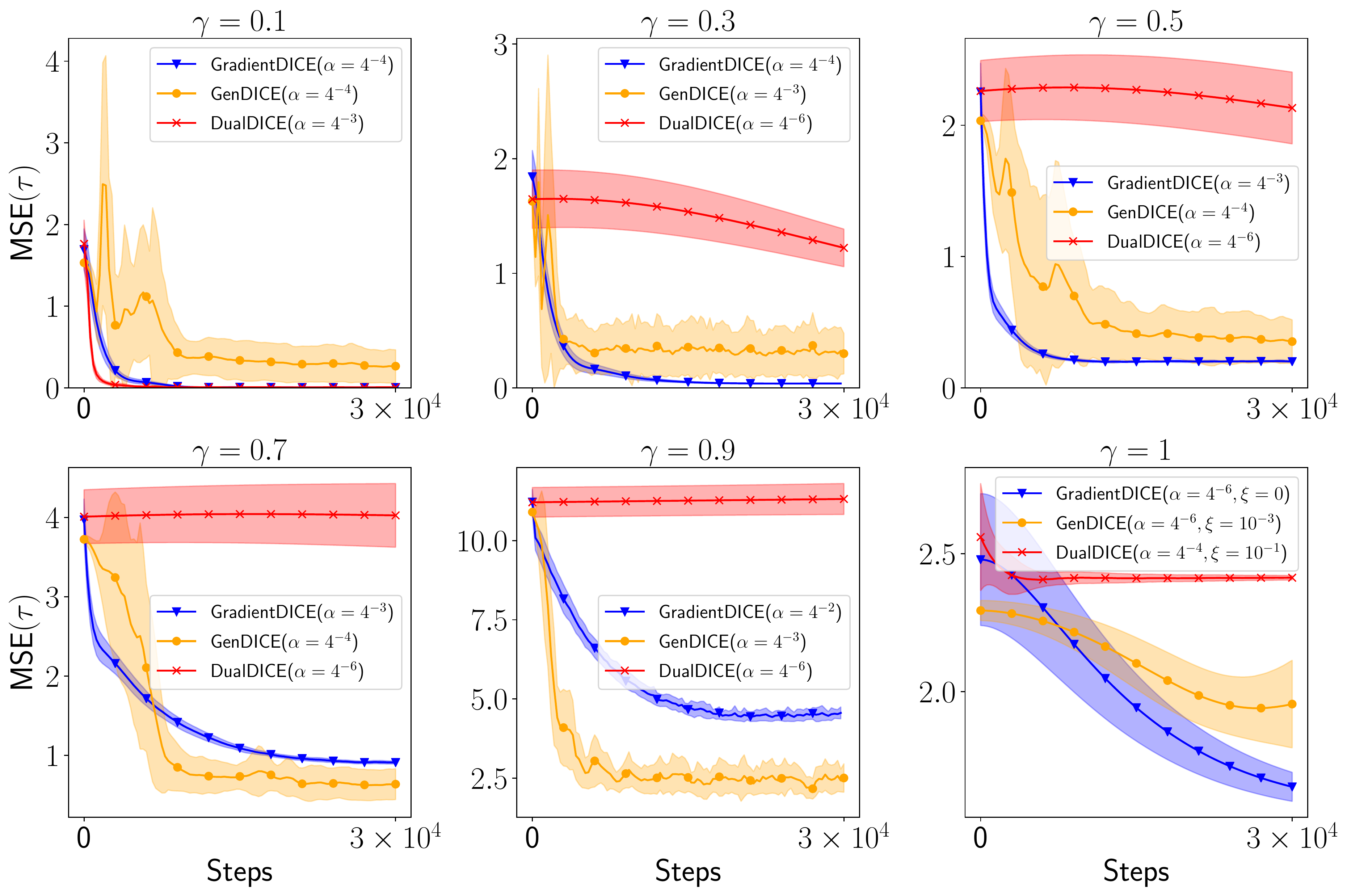}
\caption{\label{fig:boyans_chain_linear} Density ratio learning in Boyan's Chain with a linear architecture.}
\end{figure*}

\subsection{Finite Sample Analysis}
We now provide a finite sample analysis for a variant of GradientDICE, \emph{Projected GradientDICE} (Algorithm~\ref{alg:revised-gdice}), 
where we introduce projection and iterates average, 
akin to \citet{nemirovski2009robust,liu2015finite}. 
\begin{algorithm}
\caption{Projected GradientDICE}
\label{alg:revised-gdice}
\begin{algorithmic}
\FOR{$t = 0, \dots, n-1$}
\STATE $y_{t+1} \gets \Pi_Y [y_t + \alpha_t (G_{1, t} y_t + G_{2, t} w_t + G_{5, t})]$
\STATE $w_{t+1} \gets \Pi_W [w_t + \alpha_t (G_{3, t} y_t + G_{4, t} w_t)]$
\ENDFOR
\STATE Output:
\STATE $\bar{y}_n \doteq \frac{\sum_{t=1}^n \alpha_t y_t}{\sum_{t=1}^n \alpha_t}, \bar{w}_n \doteq \frac{\sum_{t=1}^n \alpha_t w_t}{\sum_{t=1}^n \alpha_t}$
\end{algorithmic}
\end{algorithm}
Intuitively, Projected GradientDICE groups the $\kappa, \eta$ in GradientDICE into $y^\top = [\kappa^\top, \eta]$.
Precisely, we have $y_t \in \R^{K+1}, w_t \in \R^K$,
\begin{align}
G_{1,t} &\doteq \begin{bmatrix}
-x_t^\top x_t & 0 \\
0 & -\lambda 
\end{bmatrix}, G_{2, t} \doteq \begin{bmatrix}
-(x_t - \gamma x_t^\prime)x_t^\top \\ \lambda x_t^\top
\end{bmatrix}, \\
G_{3, t} &\doteq \begin{bmatrix}
x_t(x_t^\top - \gamma x_t^{\prime\top}) & -\lambda x_t
\end{bmatrix}, G_{4, t} \doteq -\xi I, \\
G_{5,t} &\doteq \begin{bmatrix}
(1 - \gamma) x_{0, t} \\ -\lambda
\end{bmatrix},
\end{align}
$Y \subset \R^{K+1}$, $W \subset \R^{K}$. $\Pi_Y$ and $\Pi_W$ are projections onto $Y$ and $W$ w.r.t. $\ell_2$ norm (such projection is trivial if $Y$ and $W$ are balls), $n$ is the number of iterations, 
and $\alpha_t$ is a learning rate, detailed below.
We consider the following problem
\begin{align}
\label{pb:finite_sample}
\min_{w \in W} \max_{y \in Y} \big( L(w, y) \doteq L(\tau_w, \eta, f_\kappa) + \frac{\xi}{2}||w||^2 \big).
\end{align}
It is easy to see $L(w, y)$ is a convex-concave function and its saddle point $(w^*, y^*)$ is unique.
We assume
\begin{assumption}
\label{assu:feasibility}
$Y$ and $W$ are bounded, closed, and convex, $w^* \in W, y^* \in Y$.
\end{assumption} 
For the CCSP problem~\eqref{pb:finite_sample},
we define the optimization error
\begin{align}
\textstyle
\eopt(w, y) \doteq \max_{y^\prime \in Y} L(w, y^\prime) - \min_{w^\prime \in W} L(w^\prime, y).
\end{align}
It is easy to see $L(w, y) = 0$ iif $(w, y) = (w^*, y^*)$.
\begin{proposition}
\label{prop:opt_error}
Under Assumptions~(\ref{assu:nonsingularity}-\ref{assu:feasibility}),
for the $(\bar{w}_n, \bar{y}_n)$ from the Projected GradientDICE algorithm after $n$ iterations,
we have at least with probability $1 - \delta$
\begin{align}
\textstyle
\eopt(\bar{w}_n, \bar{y}_n) \leq \sqrt{\frac{5}{n}}(8 + 2\ln \frac{2}{\delta} )C_0,
\end{align}
where $C_0 > 0$ is a constant.
\end{proposition}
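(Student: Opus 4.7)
The plan is to recognize Projected GradientDICE as an instance of stochastic Mirror-Prox / projected stochastic primal-dual ascent-descent applied to the convex-concave problem~\eqref{pb:finite_sample}, and then invoke the high-probability saddle-point bound of \citet{nemirovski2009robust} (see also the exposition in \citet{liu2015finite}). First I would compute
\begin{align}
\nabla_w L(w,y) &= \E_p\!\left[(x_t - \gamma x_t')x_t^\top\right] y^{(1:K)} - \lambda \E_{d_\mu}[x_t]\, y^{(K+1)} + \xi w,\\
\nabla_y L(w,y) &= \big(G_{1}\, y + G_{2}\, w + G_{5}\big),
\end{align}
with $G_1, G_2, G_5$ the expectations of $G_{1,t}, G_{2,t}, G_{5,t}$, and verify that the stochastic updates in Algorithm~\ref{alg:revised-gdice} are \emph{unbiased} estimators of the true (sub)gradients of $L(w,y)$, i.e., that $\E[G_{i,t}]$ matches the relevant block of the deterministic primal-dual operator for each $i$.

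Next I would establish a uniform second-moment bound on the stochastic gradient noise. By Assumption~\ref{assu:feasibility} the iterates $(w_t, y_t)$ stay in the compact convex set $W \times Y$, so $\|w_t\|, \|y_t\|$ are uniformly bounded. Combined with Assumption~\ref{assu:feature} (uniformly bounded second moments of $x_{0,t}, x_t, x_t'$), this yields constants $M_w, M_y$ with
\begin{align}
\E\!\left[\|G_{3,t} y_t + G_{4,t} w_t\|^2\right] \le M_w^2,\qquad \E\!\left[\|G_{1,t} y_t + G_{2,t} w_t + G_{5,t}\|^2\right] \le M_y^2,
\end{align}
uniformly in $t$. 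The diameters $D_W \doteq \sup_{w,w'\in W}\|w-w'\|$ and $D_Y \doteq \sup_{y,y'\in Y}\|y-y'\|$ are also finite. Setting $\alpha_t$ to the standard $\Theta(1/\sqrt{n})$ step size (e.g., $\alpha_t = C/\sqrt{n}$ with $C$ matching the diameter-to-variance ratio), the classical analysis for stochastic saddle-point problems gives an \emph{in-expectation} bound of order $O(1/\sqrt{n})$ on $\eopt(\bar w_n, \bar y_n)$.

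To upgrade to a high-probability statement, I would apply a martingale concentration inequality (Azuma–Hoeffding, or Freedman for sharper constants) to the martingale-difference sequence obtained by subtracting conditional expectations from the inner-product terms that appear in the standard regret decomposition $L(\bar w_n, y) - L(w, \bar y_n) \le \frac{1}{\sum_t \alpha_t}\sum_t \alpha_t \langle \hat g_t, z_t - z^\star\rangle$, where $z_t = (w_t,y_t)$. Each such term is bounded by $D_{W\times Y} \cdot (\text{norm of stochastic subgradient})$, which is square-integrable under Assumptions~\ref{assu:feature}–\ref{assu:feasibility}, giving a deviation of order $\sqrt{(\ln(2/\delta))/n}$; absorbing the deterministic and stochastic parts into a single constant $C_0$ depending on $D_W, D_Y, M_w, M_y, \xi, \lambda$ then produces the claimed bound
\begin{align}
\eopt(\bar w_n,\bar y_n) \le \sqrt{\tfrac{5}{n}}\,\bigl(8 + 2\ln\tfrac{2}{\delta}\bigr)\,C_0.
\end{align}

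The main obstacle is not any single inequality but the careful bookkeeping that pins down $C_0$ with the specific numerical constants $5$, $8$, and $2$ appearing in the statement: one must align the step-size choice, the regret decomposition for saddle points (where primal and dual errors must be combined symmetrically), and the sub-Gaussian tail constants of the martingale differences so that all problem-dependent quantities collapse into a single prefactor $C_0$. Given that the algorithm structure and assumptions match the setting of \citet{nemirovski2009robust} and \citet{liu2015finite} almost verbatim, I expect the remaining work to be a direct specialization of their Theorem rather than new analysis.
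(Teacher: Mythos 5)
Your proposal follows essentially the same route as the paper: cast Projected GradientDICE as the mirror saddle-point stochastic approximation scheme of \citet{nemirovski2009robust} (via the Revised GTD analysis of \citet{liu2015finite}), bound the second moments of the stochastic primal and dual gradients using Assumption~\ref{assu:feature} together with the compactness of $W$ and $Y$, choose the constant $\Theta(1/\sqrt{n})$ step size, and read off the constants $5$, $8$, $2$ by specializing their high-probability Proposition 3.2 with $\Omega = \ln\frac{2}{\delta}$. The paper does exactly this (it does not re-derive the concentration step via Azuma--Hoeffding but simply invokes the cited proposition), so your plan is correct and matches the paper's proof.
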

Both $C_0$ and the learning rates $\alpha_t$ are detailed in the proof in the appendix.
Note it is possible to conduct a finite sample analysis without introducing projection using arguments from \citet{lakshminarayanan2018linear},
which we leave for future work.

\section{Experiments}

In this section, we present experiments comparing GradientDICE to GenDICE and DualDICE.
All curves are averaged over 30 independent runs and shaded regions indicate one standard derivation.
The implementations are made publicly available for future research.\footnote{\url{https://github.com/ShangtongZhang/DeepRL}}

\subsection{Density Ratio Learning}

\begin{figure}[h]
\centering
\includegraphics[width=0.5\textwidth]{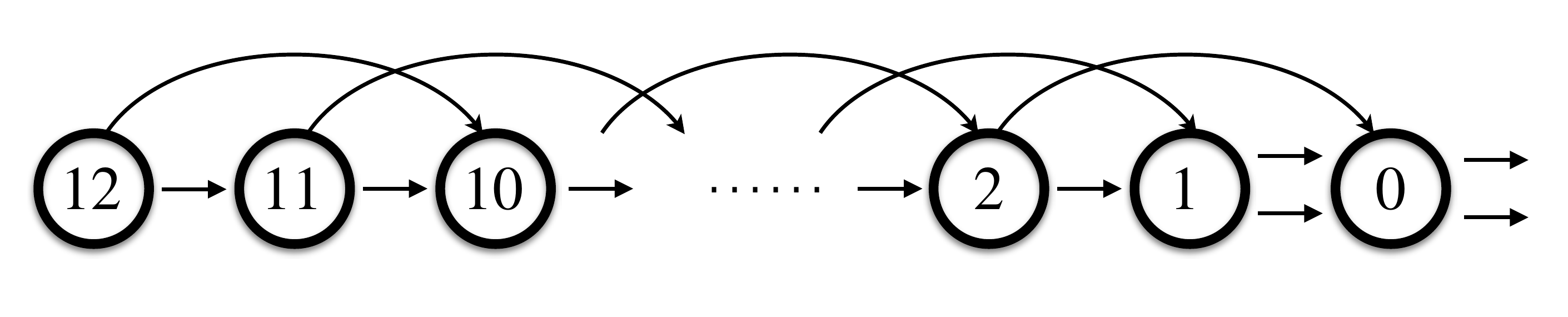}
\caption{\label{fig:boyans_chain} Two variants of Boyan's Chain. 
There are 13 states in total with two actions $\{a_0, a_1\}$ available at each state. 
The initial distribution $\tilde{\mu}_0$ is uniform over $\{s_0, \dots, s_{12}\}$.
At a state $s_i (i\geq 2)$, $a_0$ leads to $s_{i-1}$ and $a_1$ leads to $s_{i-2}$. 
At $s_1$, both actions leads to $s_0$.
At $s_0$, there are two variants. 
(1) \texttt{Episodic Boyan's Chain}: both actions at $s_0$ lead to $s_0$ itself, i.e., $s_0$ is an absorbing state.
(2) \texttt{Continuing Boyan's Chain}: both actions at $s_0$ lead to a random state among $\{s_0, \dots, s_{12}\}$ with equal probability.}
\end{figure}

\begin{figure*}[h]
\centering
\includegraphics[width=0.8\textwidth]{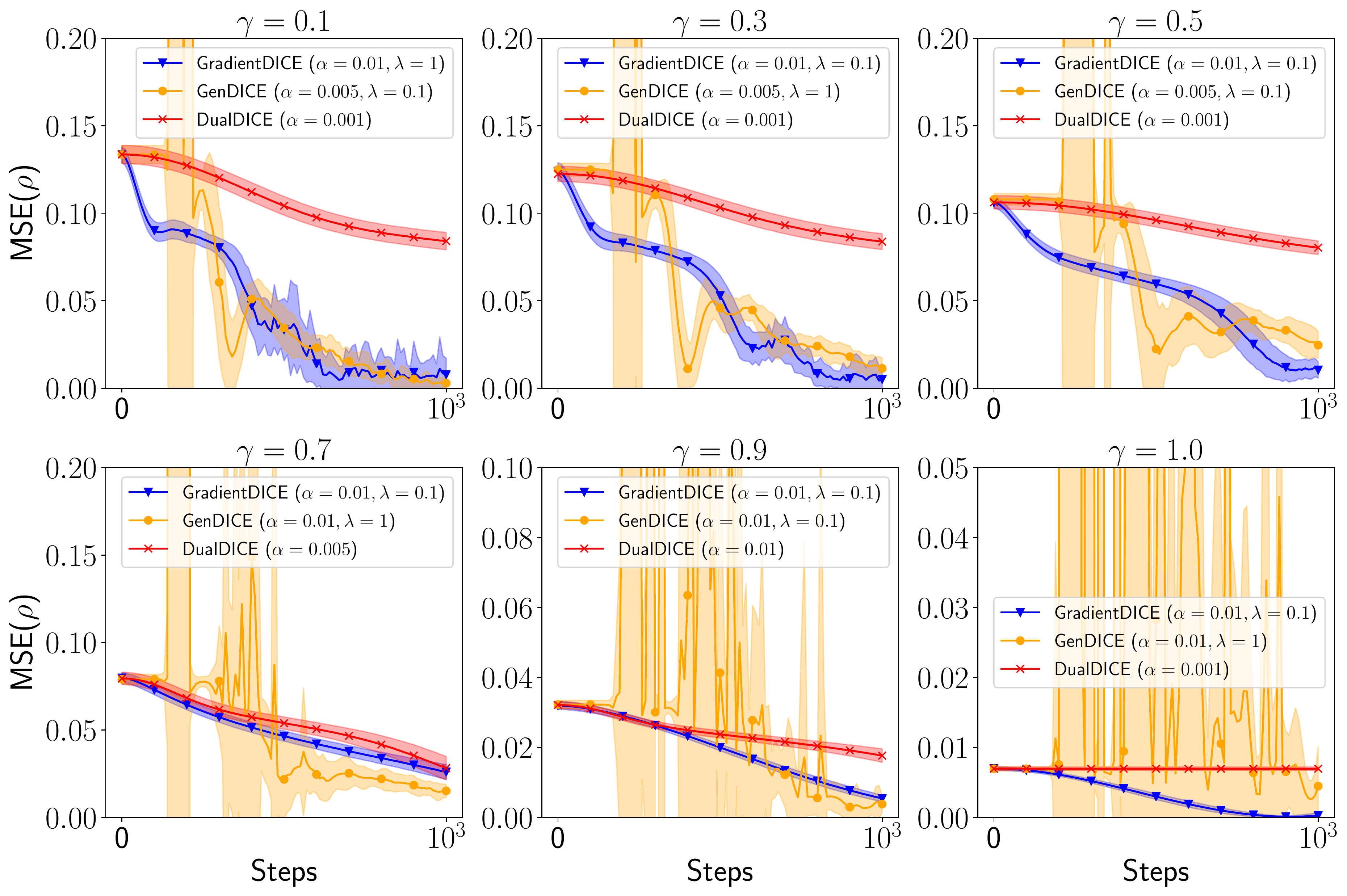}
\caption{\label{fig:mujoco_1} Off-policy evaluation in \texttt{Reacher-v2} with neural network function approximators.} 
\end{figure*}

We consider two variants of Boyan's Chain \citep{boyan1999least} as shown in Figure~\ref{fig:boyans_chain}.
In particular, we use \texttt{Episodic Boyan's Chain} when $\gamma < 1$ and \texttt{Continuing Boyan's Chain} when $\gamma = 1$. 
We consider a uniform sampling distribution, i.e., $d_\mu(s, a) = \frac{1}{26} \, \forall (s, a)$, and
a target policy $\pi$ satisfying $\pi(a_0|s) = 0.1 \, \forall s$.
We design a sequence of tasks by varying the discount factor $\gamma$ in $\{0.1, 0.3, 0.5, 0.7, 0.9, 1\}$.

We train all compared algorithms for $3 \times 10^{4}$ steps.
We evaluate the Mean Squared Error (MSE) for the predicted $\tau$ every 300 steps, computed as $\text{MSE}(\tau) \doteq \frac{1}{26}\sum_{s, a}(\tau(s, a) - \tau_*(s, a))^2$,
where the ground truth $\tau_*$ is computed analytically.
We use fixed learning rates $\alpha$ for all algorithms,
which is tuned from $\{4^{-6}, 4^{-5}, \dots, 4^{-1}\}$ to minimize the MSE($\tau$) at the end of training.
For the setting $\gamma = 1$, 
we additionally tune $\xi$ from $\{0, 10^{-3}, 10^{-2}, 10^{-1}\}$ 
(for a fair comparison, we also add this ridge regularization for GenDICE and DualDICE). 
For the penalty coefficient, we set $\lambda = 1$ as recommended by \citet{zhang2020gendice}.
We find $\lambda$ has little influence on the learning process in this domain.

We report the results in both tabular (Figure~\ref{fig:boyans_chain_tabular}) and linear (Figure~\ref{fig:boyans_chain_linear}) settings.
In the tabular setting, we use lookup tables to store $\tau, f$ and $\eta$.
In the linear setting, we use two independent sets of weights for the two actions.
As GenDICE requires $\tau \succeq 0$,
we use the nonlinearity $(\cdot)^2$ for its $\tau$ prediction as suggested by \citet{zhang2020gendice}.
We do not apply any nonlinearity for GradientDICE and DualDICE.
Our results show that GradientDICE reaches a lower prediction error at the end of training than GenDICE in 4 (5) out 6 tasks in the tabular (linear) setting.
Moreover, the learning curves of GradientDICE are more stable than those of GenDICE in all the 6 tasks in both tabular and linear settings.
Although DualDICE performs the best for the task $\gamma = 0.1$, 
it becomes unstable as $\gamma$ increases,
which is also observed in \citet{zhang2020gendice}.

\subsection{Off-Policy Evaluation}
We now benchmark DualDICE, GenDICE, and GradientDICE in an off-policy evaluation problem.  
We consider \texttt{Reacher-v2} from OpenAI Gym \citep{brockman2016openai}.
We consider policies in the form of $\pi_d(s, a) + \mathcal{N}(0, \sigma^2)$,
where $\pi_d$ is a deterministic policy trained via TD3 \citep{fujimoto2018addressing} for $10^6$ steps and $\mathcal{N}$ is Gaussian noise.
For the behavior policy, 
we set $\sigma = 0.1$ and run the policy for $10^5$ steps to collect transitions,
which form the dataset used across all the experiments.
For the target policy,
we set $\sigma = 0.05$.

We use neural networks to parameterize $\tau$ and $f$, 
each of which is represented by a two-hidden-layer network with 64 hidden units and ReLU \citep{nair2010rectified} activation function.
For GenDICE, we add $(\cdot)^2$ nonlinearity for the $\tau$ prediction by the network.
For GradientDICE and DualDICE, 
we do not have such nonlinearity in their $\tau$ prediction.
Given the learned $\tau$, 
the performance $\rho_\gamma(\pi)$ is approximated by $\hat{\rho}_\gamma(\pi) \doteq \frac{1}{N} \sum_{i=1}^N \tau(s_i, a_i)r_i$.
We train each algorithm for $10^3$ steps and examine MSE$(\rho) \doteq \frac{1}{2}(\rho_\gamma(\pi) - \hat{\rho}_\gamma(\pi))^2$ every 10 steps, 
where the ground truth $\rho_\gamma(\pi)$ is computed from Monte Carlo methods via executing the target policy $\pi$ multiple times.
We use SGD to train the neural networks with batch size 128.
The learning rate $\alpha$ and the penalty coefficient $\lambda$ are tuned from $\{0.01, 0.005, 0.001\}$ and $\{0.1, 1\}$ with grid search to minimize MVE($\rho$) at the end of training.

The results are reported in Figure~\ref{fig:mujoco_1}.
Although policy evaluation errors of GradientDICE and GenDICE tend to be similar at the end of training,
the learning curves of GradientDICE are more stable than those of GenDICE, 
which matches the results in the tabular and linear settings.
Although DualDICE tends to be more stable than both GradientDICE and GenDICE, 
it learns slower and does not work for the setting $\gamma = 1$,
which also matches the results in \citet{zhang2020gendice}.  
To summarize, GradientDICE combines the advantages of both DualDICE (stability under the total discounted reward criterion) and GenDICE (compatibility with the average reward criterion).

\section{Related Work}
Inspired by \citet{hallak2017consistent,liu2018breaking}, 
\citet{gelada2019off,uehara2019minimax} propose different estimators for learning density ratios, 
either with semi-gradient updates \citep{sutton1988learning} or by restricting the function class to Reproducing Kernel Hilbert Space.
\citet{zhang2019provably} show that some density ratios can be interpreted as special emphasis \citep{sutton2016emphatic}.
Hence the Gradient Emphasis Learning algorithm in \citet{zhang2019provably} can also be used to learn certain density ratios.
All these methods, however, require a single known behavior policy.  
By contrast, DualDICE, GenDICE, and GradientDICE cope well with multiple unknown behavior policies.


\section{Conclusion}
In this paper, we point out two problems with GenDICE and fix them with GradientDICE. 
We provide a comprehensive theoretical analysis for GradientDICE.
Our experiments confirm that the theoretical advantages of GradientDICE over GenDICE translate into an empirical performance boost in the tested domains.
Overall, our work provides a new theoretical justification for the field of density-ratio-learning-based off-policy evaluation.

\section*{Acknowledgments}
SZ is generously funded by the Engineering and Physical Sciences Research Council (EPSRC). This project has received funding from the European Research Council under the European Union's Horizon 2020 research and innovation programme (grant agreement number 637713). The experiments were made possible by a generous equipment grant from NVIDIA. 
BL’s research is funded by the National Science Foundation (NSF) under grant NSF IIS1910794, Amazon Research Award, and Adobe gift fund.
The authors thank Lihong Li and Bo Dai for the useful discussion.

\bibliography{ref}
\bibliographystyle{icml2020}

\onecolumn
\newpage
\appendix


\section{Proofs}
\subsection{Proof of Theorem~\ref{thm:gen_dice_plus_convergence}}
\begin{proof}
We first rewrite the update for $d_t$ as
\begin{align}
d_{t+1} \doteq d_t + \alpha_t (G_{t+1} d_t + g_{t+1}) = d_t + \alpha_t (h(d_t) + M_{t+1}),
\end{align}
where
\begin{align}
h(d) \doteq g + Gd, \, M_{t+1} \doteq (G_{t+1} - G) d_t + (g_{t+1} - g)
\end{align}
The rest is the same as the proof of Theorem 2 in page 44 of \citet{maei2011gradient} without changing notations except for that we need to verify that the real parts of all eigenvalues of $G$ are negative.
Although our $G$ is more involved than the $G$ used in \citet{maei2011gradient},
a similar routine can be carried out to verify this condition.

We first show that the real part of any nonzero eigenvalue is strictly negative.
Let $\zeta \in \mathbb{C}, \zeta \neq 0$ be a nonzero eigenvalue of $G$ with normalized eigenvector $x$, i.e., $x^* x = 1$, where $x^*$ is the complex conjugate of $x$.
Hence $x^* Gx = \zeta, x \neq 0$. 
Let $x^\top = (x_1^\top, x_2^\top, x_3)$,
where $x_1 \in \mathbb{C}^{K}, x_2 \in \mathbb{C}^{K}, x_3 \in \mathbb{C}$, 
it is easy to verify 
\begin{align}
\zeta = -x_1^* C x_1 + x_2^* A^\top x_1 - x_1^*Ax_2 - \xi x_2^* x_2 + \lambda x_3^* d_\mu^\top X x_2 - \lambda x_2^* X^\top d_\mu x_3 - \lambda x_3^* x_3.
\end{align}
As $A$ is real, $A^\top = A^*$.
Consequently, 
$(x_2^* A^\top x_1)^* = x_1^*Ax_2$,
yielding
$\Re(x_2^* A^\top x_1 - x_1^*Ax_2) = 0$,
where $\Re(\cdot)$ denotes the real part.
Similarly, we can show
$\Re(\lambda x_3^* d_\mu^\top X x_2 - \lambda x_2^* X^\top d_\mu x_3) = 0$.
Consequently, we have
\begin{align}
\Re(\zeta) = \Re(x^* G x) = -x_1^*Cx_1 - \xi x_2^* x_2 - \lambda x_3^* x_3.
\end{align}
According to Assumption~\ref{assu:nonsingularity}, $x_1^* C x_1 \geq 0$,  
where the equality holds iff $x_1 = 0$.
When $\gamma < 1$, we have $\xi = 0$. 
Then $\zeta \neq 0$ implies at least one of $\{x_1, x_3\}$ is nonzero.
Consequently, we have $\Re(\zeta) < 0$.
When $\gamma = 1$, we have $\xi > 0$.
Then $\zeta \neq 0$ implies at least one of $\{x_1, x_2, x_3\}$ is nonzero.
Consequently, we have $\Re(\zeta) < 0$.

We then show 0 is not an eigenvalue of $G$, which completes the proof.
It suffices to show $\det(G) \neq 0$.
For a block matrix 
\begin{align}
Y = 
\begin{bmatrix}
Y_1 &Y_2 \\ Y_3 & Y_4
\end{bmatrix},
\end{align}
we have $\det(Y) = \det(Y_4)\det(Y_1 - Y_2 Y_4^{-1} Y_3) = \det(Y_1) \det(Y_4 - Y_3 Y_1^{-1} Y_2)$.
Applying this rule to $G$ yields
\begin{align}
\det(G) &= -\lambda \det (\begin{bmatrix}
-C & -A \\ A^\top & -\xi I
\end{bmatrix} + \lambda^{-1} \begin{bmatrix}
0 & 0 \\ 0 & -\lambda^2 X^\top d_\mu d_\mu^\top X
\end{bmatrix}) \\
&= (-1)^{2K+1} \lambda \det(\begin{bmatrix}
C & A \\ -A^\top & \xi I + \lambda X^\top d_\mu d_\mu^\top X
\end{bmatrix}) \\
&= (-1)^{2K + 1} \lambda \det(C) \det(\xi I + \lambda X^\top d_\mu d_\mu^\top X + A^\top C^{-1}A).
\end{align}
Note $\lambda X^\top d_\mu d_\mu^\top X$ is always positive semidefinite.
Assumption~\ref{assu:ridge} ensures at least one of $\{\xi I, A^\top C^{-1} A\}$ is strictly positive definite, 
which ensures $\det(G) \neq 0$.
\end{proof}

\subsection{Proof of Proposition~\ref{prop:opt_error}}
\begin{proof}
This proof employs results from Proposition 3.2 in \citet{nemirovski2009robust} by mapping Projected GradientDICE to the general mirror saddle-point stochastic approximation algorithm in \citet{nemirovski2009robust}. 
Given the similarity between Projected GradientDICE and Revised GTD \citep{liu2015finite}, 
most of our proof is a verbatim repetition (thus omitted) of the finite sample analysis for Revised GTD in the proof of Proposition 3 in \citet{liu2015finite} except for that we need to bound different terms.
Namely, the \emph{stochastic sub-gradient vector} for Projected GradientDICE is 
\begin{align}
G(w, y) \doteq \begin{bmatrix}
G_w(w, y) \\ -G_y(w, y)
\end{bmatrix} \doteq \begin{bmatrix}
G_{3, t}y + G_{4, t}w \\
-(G_{1, t}y + G_{2, t}w + G_{5, t})
\end{bmatrix}.
\end{align}
We need only to bound $\E[||G_w(w, y)||^2]$ and $\E[||G_y(w, y)||^2]$.
Assumption~\ref{assu:feasibility} ensures the following quantities are well-defined:
\begin{align}
D_W &\doteq (\max_{w \in W} ||w||^2 - \min_{w \in W} ||w||^2)^{\frac{1}{2}}, \\
D_Y &\doteq (\max_{y \in Y} ||y||^2 - \min_{y \in Y} ||y||^2)^{\frac{1}{2}}
\end{align}
We define shorthand $\bar{G}_i \doteq \E[G_{i, t}]$ for $i = 1, \dots 5$, all of which are constant (c.f. Eq~\eqref{eq:expected_update}).
Under Assumption~\ref{assu:feature}, it is easy to verify that there are constants $\sigma_i > 0$ such that
\begin{align}
\E[||G_{i,t} - \bar{G}_{i}]||^2] \leq \sigma_i^2.
\end{align}
Using the equality 
\begin{align}
\E[||x||^2] = \E[||x - \E[x] ||^2] + ||\E[x]||^2,
\end{align}
we have
\begin{align}
\E[||G_w(w, y)||^2] &\leq \E[||G_{3, t}||^2] D_Y^2 + \E[||G_{4, t}||^2] D_W^2 \\
&\leq \sigma_3^2 D_Y^2 + \sigma_3^2 ||\bar{G}_{3}||^2 + \xi^2 D_W^2 \\
&\doteq C_1 \\
\E[||G_y(w, y)||^2] &\leq \sigma_1^2 D_Y^2 ||\bar{G}_1||^2 + \sigma_2^2 D_W^2 ||\bar{G}_2||^2 + \sigma_5^2 ||\bar{G}_5||^2 \\
&\doteq C_2
\end{align}
We now have all gradients to invoke Proposition 3.2 in \citet{nemirovski2009robust}.
Particularly, the $M_*^2$ in Eq 3.5 in \citet{nemirovski2009robust} is now
\begin{align}
M_*^2 \doteq 2D_W^2C_1^2 + 2D_Y^2C_2^2.
\end{align}
According to Eq 3.12 in \citet{nemirovski2009robust}, 
we set the learning rates $\alpha_t$ in Projected GradientDICE as
\begin{align}
\alpha_t \doteq \frac{2c}{M_* \sqrt{n}},
\end{align}
where $c > 0$ is a constant.
Note this is a constant learning rate.
Now Proposition 3.2 in \citet{nemirovski2009robust} states for any $\Omega > 1$,
\begin{align}
\Pr\Big\{\eopt(\bar{w}_n, \bar{y}_n) > \sqrt{\frac{5}{n}}(8 + 2\Omega )\max\{c, c^{-1}\} M_* \Big\} \leq 2e^{-\Omega}.
\end{align}
For a small $\delta > 0$, setting $\Omega = -\ln \frac{\delta}{2}$ completes the proof.
\end{proof}

\section{Computational Details of the Hard Example for GenDICE}
In our instantiation, we have 
\begin{align}
J(\tau, f, \eta) &= \E_p[\tau(s, a)f(s^\prime, a^\prime)] - \E_{d_\mu}[\tau(s, a)(f(s, a) + \frac{1}{4}f(s, a)^2)] + \E_{d_\mu}[\eta \tau(s, a) - \eta] - \frac{\eta^2}{2}\\
&= \frac{1}{4}\tau_1^2 f_1 + \frac{1}{4}\tau_1^2 f_2
+ \frac{1}{4}\tau_2^2 f_1 + \frac{1}{4}\tau_2^2 f_2 
-\frac{1}{2} \tau_1^2 (f_1 + \frac{1}{4}f_1^2) - \frac{1}{2} \tau_2^2 (f_2 + \frac{1}{4}f_2^2) + \frac{1}{2} \eta \tau_1^2 + \frac{1}{2} \eta \tau_2^2 - \eta - \frac{1}{2}\eta^2, \\
\frac{\partial J}{\partial \tau_1} &= \frac{1}{2}\tau_1f_1 + \frac{1}{2}\tau_1f_2 - \tau_1(f_1 + \frac{1}{4}f_1^2) + \eta \tau_1, \\
\frac{\partial J}{\partial \tau_2} &= \frac{1}{2}\tau_2f_1 + \frac{1}{2}\tau_2 f_2 - \tau_2(f_2 + \frac{1}{4}f_2^2) + \eta \tau_2, \\
\frac{\partial J}{\partial f_1} &= \frac{1}{4}\tau_1^2 + \frac{1}{4}\tau_2^2  - \frac{1}{2}\tau_1^2(1 + \frac{1}{2}f_1), \\
\frac{\partial J}{\partial f_1} &= \frac{1}{4}\tau_1^2 + \frac{1}{4}\tau_2^2  - \frac{1}{2}\tau_1^2(1 + \frac{1}{2}f_2), \\
\frac{\partial J}{\partial \eta} &= \frac{1}{2} \tau_1^2 + \frac{1}{2} \tau_2^2 - 1 - \eta.
\end{align}
It is clear that the gradient vanishes at $(\tau_1, \tau_2, f_1, f_2, \eta) = (0, 0, 0, 0, -1)$.

\section{Details of Experiments}
We implemented \texttt{Boyan's Chain} \citep{boyan1999least} by ourselves.
\texttt{Reacher-v2} is from Open AI gym \citep{brockman2016openai}~\footnote{\url{https://gym.openai.com/}}, 
which is used as a benchmark in both \citet{nachum2019dualdice} and \citet{zhang2020gendice}.
For \texttt{Reacher-v2} experiments with neural networks, 
we use a batch size 128 for training.
For \texttt{Boyan's Chain} experiments,
the batch size is 1.
For DualDICE, the convex function to compose the loss is $f(x) = \frac{2}{3}|x|^{\frac{3}{2}}$ as recommended by \citet{nachum2019dualdice}.
We conducted our experiments in a server with an Intel\textsuperscript{\textregistered} Xeon\textsuperscript{\textregistered} Gold 6152 CPU.
Our implementation is based on PyTorch.

\subsection{State Features of Boyan's Chain}
We use exactly the same features as \citet{boyan1999least}.
\begin{align}
x(s_{12}) &\doteq [1, 0, 0, 0]^\top \\
x(s_{11}) &\doteq [0.75, 0.25, 0, 0]^\top \\
x(s_{10}) &\doteq [0.5, 0.5, 0, 0]^\top \\
x(s_9) &\doteq [0.25, 0.75, 0, 0]^\top \\
x(s_8) &\doteq [0, 1, 0, 0]^\top \\
x(s_7) &\doteq [0, 0.75, 0.25, 0]^\top \\
x(s_6) &\doteq [0, 0.5, 0.5, 0]^\top \\
x(s_5) &\doteq [0, 0.25, 0.75, 0]^\top \\
x(s_4) &\doteq [0, 0, 1, 0]^\top \\
x(s_3) &\doteq [0, 0, 0.75, 0.25]^\top \\
x(s_2) &\doteq [0, 0, 0.5, 0.5]^\top \\
x(s_1) &\doteq [0, 0, 0.25, 0.75]^\top \\
x(s_0) &\doteq [0, 0, 0, 1]^\top
\end{align}

\end{document}